\newif\ifArxiv
\Arxivfalse
\Arxivtrue

\ifArxiv
    \wlog{Using ArXiv version}
\else
    \wlog{Using NeurIPS version}
\fi

\documentclass[a4paper,]{article}

\usepackage[algoruled,lined,noend,linesnumbered,algo2e]{algorithm2e}
\SetKwFor{ParFor}{parallel for}{do}{endfor}

\usepackage{amsmath}
\usepackage{amsthm}
\usepackage{amssymb}
\usepackage{mathtools}
\usepackage{bm}

\newcommand*{\bbbig}[1]{\bigg{#1}}

\newcommand{\eps}{\varepsilon}

\DeclarePairedDelimiter\Parens()
\DeclarePairedDelimiter\Bracks[]
\DeclarePairedDelimiter\Ceil\lceil\rceil

\providecommand\given{}
\newcommand\GivenSymbol[1][]{%
    \mathchoice{\:}{\:}{\,}{\,}#1\vert%
    \allowbreak%
    \mathchoice{\:}{\:}{\,}{\,}%
    \mathopen{}%
}

\DeclarePairedDelimiterX\Set[1]\{\}{%
    \renewcommand\given{\GivenSymbol[\delimsize]}%
    #1%
}
\DeclarePairedDelimiterXPP\ParensWithGiven[1]{}{(}{)}{}{%
    \renewcommand\given{\GivenSymbol[\delimsize]}%
    #1%
}
\DeclarePairedDelimiterXPP\ParensWithGGiven[1]{}{(}{)}{}{%
    \renewcommand\given{\GGivenSymbol[\delimsize]}%
    #1%
}
\DeclarePairedDelimiterXPP\BracksWithGiven[1]{}{[}{]}{}{%
    \renewcommand\given{\GivenSymbol[\delimsize]}%
    #1%
}
\NewDocumentCommand\Prob{ e{_^} }{
    \operatorname*{Pr}
    \IfNoValueF{#1}{\sb{#1}}
    \IfNoValueF{#2}{\errmessage{Not supposed to have superscript}\sp{#2}}
    \BracksWithGiven
}
\NewDocumentCommand\Ev{ e{_^} }{
    \operatorname{\mathbb{E}}
    \IfNoValueF{#1}{\sb{#1}}
    \IfNoValueF{#2}{\errmessage{Not supposed to have superscript}\sp{#2}}
    \BracksWithGiven
}
\NewDocumentCommand\Var{ e{_^} }{
    \operatorname{Var}
    \IfNoValueF{#1}{\sb{#1}}
    \IfNoValueF{#2}{\errmessage{Not supposed to have superscript}\sp{#2}}
    \ParensWithGiven
}
\NewDocumentCommand\Cov{ e{_^} }{
    \operatorname{Cov}
    \IfNoValueF{#1}{\sb{#1}}
    \IfNoValueF{#2}{\errmessage{Not supposed to have superscript}\sp{#2}}
    \ParensWithGiven
}
\NewDocumentCommand\KL{ e{_^} }{
    \operatorname{KL}
    \IfNoValueF{#1}{\sb{#1}}
    \IfNoValueF{#2}{\errmessage{Not supposed to have superscript}\sp{#2}}
    \ParensWithGGiven
}

\newcommand{\R}{\mathbb{R}}
\newcommand{\N}{\mathbb{N}}
\newcommand{\indicator}{\bm{1}}

\newcommand{\event}{\mathcal{E}}

\DeclareMathOperator{\bigO}{{O}}
\DeclareMathOperator{\bigOmega}{\Omega}

\DeclareMathOperator{\Uniform}{Uniform}

\newcommand*{\iid}{\mathrel{\overset{\makebox[0pt]{\mbox{\normalfont\tiny\sffamily iid}}}{\sim}}}
\DeclareMathOperator{\sign}{sign}

\DeclarePairedDelimiter\abs{\lvert}{\rvert}
\DeclarePairedDelimiterXPP\normone[1]{}\lVert\rVert{_1}{#1}
\DeclarePairedDelimiterXPP\normtwo[1]{}\lVert\rVert{_2}{#1}
\DeclarePairedDelimiterXPP\norminf[1]{}\lVert\rVert{_\infty}{#1}
\DeclarePairedDelimiterXPP\normmax[1]{}\lVert\rVert{_\mathrm{max}}{#1}
\DeclarePairedDelimiterXPP\normspec[1]{}\lVert\rVert{_\mathrm{spectral}}{#1}
\let\normmax\norminf

\DeclareMathOperator{\loss}{\mathcal{L}}
\newcommand*{\weaklearner}{\mathcal{W}}

\newcommand*{\hclass}{\mathcal{H}}
\newcommand*{\rhclass}{\bm{\hclass}}

\let\hr\rhclass
\let\w\weaklearner
\DeclareMathOperator{\di}{\mathcal{D}}
\newcommand*{\x}{\mathcal{X}}

\newcommand{\alb}{\mathcal{B}_{\mathcal{A}}}

\newcommand{\ddu}{\mathcal{U}}

\newcommand{\ddnra}{D}
\newcommand{\ddnr}{\mathcal{D}}
\newcommand{\cc}{\mathbf{c}}
\newcommand{\al}{\mathcal{A}}
\newcommand{\ind}{\mathbf{1}}

\DeclareMathOperator{\ldccu}{\mathcal{L}_{\mathcal{U}}^{\mathbf{c}}}
\DeclareMathOperator{\ldcu}{\mathcal{L}_{\mathcal{U}}^{\mathit{c}}}
\newcommand{\ddr}{\rD}

\NewDocumentCommand\E{ e{_^} }{
    \operatorname{\mathbb{E}}
    \IfNoValueF{#1}{\sb{#1}}
    \IfNoValueF{#2}{\errmessage{Not supposed to have superscript}\sp{#2}}
    \BracksWithGiven
}
\NewDocumentCommand\p{ e{_^} }{
    \operatorname*{Pr}
    \IfNoValueF{#1}{\sb{#1}}
    \IfNoValueF{#2}{\errmessage{Not supposed to have superscript}\sp{#2}}
    \BracksWithGiven
}

\newcommand{\hpb}{\rhclass_{p,\cc,R}}
\newcommand{\hpbc}{\rhclass_{p,c,R}}
\newcommand{\hbc}{\rhclass_{c}}
\newcommand{\hbcc}{\rhclass_{\cc}}
\newcommand{\C}{C_{s}}
\newcommand{\Cm}{C_{b}}
\newcommand{\CtC}{C_{l}}
\newcommand{\Cs}{\tilde{C}}

\input{mathstyle}
\usepackage[textsize=tiny,textwidth=0.7in,obeyDraft]{todonotes}
\newcommand{\pagetodo}[2][]{}
\newcommand{\note}[2][]{}
\newcommand{\arthur}[2][]{}

\usepackage[
    pdftex,
    bookmarks=true,
    bookmarksnumbered=true,
    breaklinks=true,
    colorlinks=true,
    linkcolor=blue,
    urlcolor=blue,
    citecolor=blue,
    anchorcolor=blue,
    draft=false,
    hypertexnames=false %
]{hyperref}
\pdfstringdefDisableCommands{%
    \def\cref#1{<#1>}%
}
\usepackage[capitalize,nameinlink]{cleveref}

\crefname{algorithm}{Algorithm}{Algorithms}
\crefname{theorem}{Theorem}{Theorems}
\crefname{lemma}{Lemma}{Lemmas}
\crefname{proposition}{Proposition}{Propositions}
\crefname{corollary}{Corollary}{Corollaries}
\crefname{conjecture}{Conjecture}{Conjectures}
\crefname{claim}{Claim}{Claims}
\crefname{definition}{Definition}{Definitions}
\crefname{example}{Example}{Examples}
\crefname{remark}{Remark}{Remarks}

\usepackage{ifdraft}

\ifdraft{%
}{%
}
\usepackage{autonum}
\AfterEndPreamble{%
    \undef{\eqref}%
    \undef{\autoref}%
}

\usepackage{enumitem}
\usepackage{thm-restate}
\usepackage{mathrsfs}
\usepackage{amsbsy}

\newtheorem{theorem}{Theorem}[section]
\newtheorem{lemma}[theorem]{Lemma}

\newtheorem{claim}{Claim}

\theoremstyle{definition}
\newtheorem{definition}{Definition}
\newtheorem{remark}{Remark}
\newtheorem{assumption}[theorem]{Assumption} %

\newcommand*{\Cn}{C_\mathrm{n}}

\ifArxiv
    \usepackage[]{natbib}

    \usepackage{fullpage}

    \usepackage{environ}
    \newcommand{\acksection}{\section*{Acknowledgments and Disclosure of Funding}}
    \NewEnviron{ack}{%
        \acksection
        \BODY
    }

    \let\And\and
\else

    \usepackage[final]{neurips_2024}

\fi

\usepackage[utf8]{inputenc} %
\usepackage[T1]{fontenc}    %
\usepackage{url}            %

\bibliographystyle{unsrtnat}

\title{Optimal Parallelization of Boosting}
\author{%
    Arthur {da Cunha}\\
    Department of Computer Science\\
    Aarhus University\\
    \texttt{dac@cs.au.dk}
    \And
    Mikael M\o ller H\o gsgaard\\
    Department of Computer Science\\
    Aarhus University\\
    \texttt{hogsgaard@cs.au.dk}
    \And
    Kasper Green Larsen\\
    Department of Computer Science\\
    Aarhus University\\
    \texttt{larsen@cs.au.dk}
}
\ifArxiv
    \date{}
\fi

\begin{document}
    \maketitle

    \begin{abstract}
        Recent works on the parallel complexity of Boosting have established strong lower bounds on the tradeoff between the number of training rounds $p$ and the total parallel work per round $t$.
These works have also presented highly non-trivial parallel algorithms that shed light on different regions of this tradeoff.
Despite these advancements, a significant gap persists between the theoretical lower bounds and the performance of these algorithms across much of the tradeoff space.
In this work, we essentially close this gap by providing both improved lower bounds on the parallel complexity of weak-to-strong learners, and a parallel Boosting algorithm whose performance matches these bounds across the entire $p$ vs.~$t$ compromise spectrum, up to logarithmic factors.
Ultimately, this work settles the parallel complexity of Boosting algorithms that are nearly sample-optimal.

    \end{abstract}

    \section{Introduction}
Boosting is an extremely powerful and elegant idea that allows one to combine multiple inaccurate classifiers into a highly accurate \emph{voting classifier}. Algorithms such as AdaBoost~\citep{adaboost} work by iteratively running a base learning algorithm on reweighed versions of the training data to produce a sequence of classifiers $h_1,\dots,h_p$. After obtaining $h_i$, the weighting of the training data is updated to put larger weights on samples misclassified by $h_i$, and smaller weights on samples classified correctly. This effectively forces the next training iteration to focus on points with which the previous classifiers struggle. After sufficiently many rounds, the classifiers $h_1,\dots,h_p$ are finally combined by taking a (weighted) majority vote among their predictions.
Many Boosting algorithms have been developed over the years; for example, \citet{grove1998boosting,ratsch2005efficient,smoothboost,gradboost}, with modern Gradient Boosting~\citep{gradboost} algorithms like XGBoost~\citep{xgboost} and LightGBM~\citep{lightGBM} often achieving state-of-the-art performance on learning tasks while requiring little to no data cleaning. See, e.g., the excellent survey by \citet{survey} for more background on Boosting.

While Boosting enjoys many advantages, it does have one severe drawback, also highlighted in \citet{survey}: Boosting is completely sequential, as each of the consecutive training steps requires the output of previous steps to determine the reweighed learning problem. This property is shared by all Boosting algorithms and prohibits the use of computationally heavy training by the base learning algorithm in each iteration. For instance, Gradient Boosting algorithms often require hundreds to thousands of iterations to achieve the best accuracy. The crucial point is that even if you have access to thousands of machines for training, there is no way to parallelize the steps of Boosting and distribute the work among the machines (at least beyond the parallelization possible for the base learner). In effect, the training time of the base learning algorithm is directly multiplied by the number of steps of Boosting.

Multiple recent works~\citep{long,theimpossibilityofparallelizingboosting,thecostofparallelizingboosting} have studied the parallelization of Boosting from a theoretical point of view, aiming for an understanding of the inherent tradeoffs between the number of training rounds $p$ and the total parallel work per round $t$. These works include both strong lower bounds on the cost of parallelization and highly non-trivial parallel Boosting algorithms with provable guarantees on accuracy. Previous studies, however, leave a significant gap between the performance of the parallel algorithms and the proven lower bounds.

The main contribution of this work is to close this gap by both developing a parallel algorithm with a better tradeoff between $p$ and $t$, as well as proving a stronger lower bound on this tradeoff. To formally state our improved results and compare them to previous works, we first introduce the theoretical framework under which parallel Boosting is studied.

\paragraph{Weak-to-Strong Learning.}
Following the previous works~\citet{theimpossibilityofparallelizingboosting,thecostofparallelizingboosting}, we study parallel Boosting in the theoretical setup of weak-to-strong learning. Weak-to-strong learning was introduced by \citet{kearns1988learning,kearns1994cryptographic} and has inspired the development of the first Boosting algorithms~\citep{schapire1990strength}. In this framework, we consider binary classification over an input domain $\x$ with an unknown target concept $c\colon \x \to \{-1,1\}$ assigning labels to samples. A $\gamma$-weak learner for $c$ is then a learning algorithm $\weaklearner$ that, for any distribution $\di$ over $\x$, when given at least some constant $m_0$ i.i.d.\ samples from $\di$, produces with constant probability a hypothesis $h$ with $\loss_{\di}(h) \leq 1/2-\gamma$. Here $\loss_{\di}(h) = \Pr_{\rx \sim \di}[h(\rx) \neq c(\rx)]$. The goal in weak-to-strong learning is then to \emph{boost} the accuracy of $\weaklearner$ by invoking it multiple times. Concretely, the aim is to produce a strong learner: a learning algorithm that, for any distribution $\di$ over $\x$ and any $0 < \delta, \eps < 1$, when given $m(\eps,\delta)$ i.i.d.\ samples from $\di$, produces with probability at least $1-\delta$ a hypothesis $h\colon \x \to \{-1,1\}$ such that $\loss_{\di}(h) \leq \eps$. We refer to $m(\eps, \delta)$ as the \emph{sample complexity} of the weak-to-strong learner.

Weak-to-strong learning has been extensively studied over the years, with many proposed algorithms, among which AdaBoost~\citep{adaboost} is perhaps the most famous. If $\hclass$ denotes a hypothesis set such that $\weaklearner$ always produces hypotheses from $\hclass$, and if $d$ denotes the VC-dimension of $\hclass$, then in terms of sample complexity, AdaBoost is known to produce a strong learner with sample complexity $m_\mathrm{Ada}(\eps,\delta)$ satisfying
\begin{align}
    m_\mathrm{Ada}(\eps,\delta) = \bigO\Parens*{\frac{d \ln(\frac{d}{\eps \gamma}) \ln(\frac{1}{\eps \gamma})}{\gamma^2 \eps} + \frac{\ln(1/\delta)}{\eps}}.
\label{eq:ada}
\end{align}
This can be proved by observing that after $t = \bigO(\gamma^{-2} \ln m)$ iterations, AdaBoost produces a voting classifier $f(x) = \sign(\sum_{i=1}^t \alpha_i h_i(x))$ with all \emph{margins} on the training data being $\Omega(\gamma)$. The sample complexity bound then follows by invoking the best-known generalization bounds for large margin voting classifiers~\citep{breiman1999prediction,gao}. Here, the margin of the voting classifier $f$ on a training sample $(x,c(x))$ is defined as $c(x) \sum_{i=1}^t \alpha_i h_i(x)/ \sum_{i=1}^t \abs{\alpha_i}$. This sample complexity comes within logarithmic factors of the optimal sample complexity $m_\mathrm{OPT}(\eps,\delta) = \Theta(d/(\gamma^2 \eps) + \ln(1/\delta)/\eps)$ obtained, e.g., in~\citet{optimalweaktostronglearning}.

\paragraph{Parallel Weak-to-Strong Learning.}
The recent work by \citet{theimpossibilityofparallelizingboosting} formalized parallel Boosting in the above weak-to-strong learning setup. Observing that all training happens in the weak learner, they proposed the following definition of parallel Boosting: A weak-to-strong learning algorithm has parallel complexity $(p,t)$ if for $p$ consecutive rounds it queries the weak learner with $t$ distributions. In each round $i$, if $D^i_1,\dots,D^i_t$ denotes the distributions queried, the weak learner returns $t$ hypotheses $h^i_1,\dots,h^i_t \in \hclass$ such that $\loss_{D^i_j}(h^i_j) \leq 1/2-\gamma$ for all $j$. At the end of the $p$ rounds, the weak-to-strong learner outputs a hypothesis $f\colon \x \to \{-1,1\}$. The queries made in each round and the final hypothesis $f$ must be computable from the training data as well as all hypotheses $h^i_j$ seen in previous rounds. The motivation for the above definition is that we could let one machine/thread handle each of the $t$ parallel query distributions in a round.

Since parallel weak-to-strong learning is trivial if we make no requirements on $\loss_{\di}(f)$ for the output $f\colon \x \to \{-1,1\}$ (simply output $f(x)=1$ for all $x \in \x$), we focus hereon on parallel weak-to-strong learners that are near-optimal in terms of the sample complexity and accuracy tradeoff. More formally, from the upper bound side, our goal is to obtain a sample complexity matching at least that of AdaBoost, stated in~\cref{eq:ada}. That is, rewriting the loss $\eps$ as a function of the number of samples $m$, we aim for output classifiers $f$ satisfying
\[
    \loss_{\di}(f) = \bigO\Parens*{\frac{d \ln(m) \ln(m/d) + \ln(1/\delta)}{\gamma^2 m}}.
\]
When stating lower bounds in the following, we have simplified the expressions by requiring that the \emph{expected} loss satisfies $\loss_{\di}(f) = \bigO(m^{-0.01})$. Note that this is far larger than the upper bounds, except for values of $m$ very close to $\gamma^{-2} d$. This only makes the lower bounds stronger. We remark that all the lower bounds are more general than this, but focusing on $m^{-0.01}$ in this introduction yields the cleanest bounds.

With these definitions, classic AdaBoost and other weak-to-strong learners producing voting classifiers with margins $\Omega(\gamma)$ all have a parallel complexity of $(\Theta(\gamma^{-2} \ln m), 1)$: They all need $\gamma^{-2} \ln m$ rounds to obtain $\Omega(\gamma)$ margins. \citet{theimpossibilityofparallelizingboosting} presented the first alternative tradeoff by giving an algorithm with parallel complexity $(1, \exp(\bigO(d \ln(m)/\gamma^2)))$. Subsequent work by~\citet{thecostofparallelizingboosting} gave a general tradeoff between $p$ and $t$. When requiring near-optimal accuracy, their tradeoff gives, for any $1 \leq R \leq 1/(2 \gamma)$, a parallel complexity of $(O(\gamma^{-2} \ln(m)/R), \exp(\bigO(d R^2)) \ln(1/\gamma))$. The accuracy of both of these algorithms was proved by arguing that they produce a voting classifier with all margins $\Omega(\gamma)$.

On the lower bound side, \citet{theimpossibilityofparallelizingboosting} showed that one of three things must hold: Either $p \geq \min\{\Omega(\gamma^{-1} \ln m), \exp(\Omega(d))\}$, or $t \geq \min\{\exp(\Omega(d\gamma^{-2})), \exp(\exp(\Omega(d)))\}$ or $p \ln(tp) = \Omega(d \ln(m) \gamma^{-2})$.

\citet{thecostofparallelizingboosting} also presented a lower bound that for some parameters is stronger than that of \citet{theimpossibilityofparallelizingboosting}, and for some is weaker.
Concretely, they show that one of the following two must hold: Either $p \geq \min\{\Omega(\gamma^{-2} d), \Omega(\gamma^{-2} \ln m), \exp(\Omega(d))\}$, or $t \geq \exp(\Omega(d))$. Observe that the constraint on $t$ is only single-exponential in $d$, whereas the previous lower bound is double-exponential. On the other hand, the lower bound on $p$ is essentially stronger by a $\gamma^{-1}$ factor. Finally, they also give an alternative lower bound for $p = \bigO(\gamma^{-2})$, essentially yielding $p \ln t = \Omega(\gamma^{-2}d)$.

Even in light of the previous works, it is still unclear what the true complexity of parallel Boosting is. In fact, the upper and lower bounds only match in the single case where $p = \Omega(\gamma^{-2}\ln m)$ and $t=1$, i.e., when standard AdaBoost is optimal.

\paragraph{Our Contributions.}
In this work, we essentially close the gap between the upper and lower bounds for parallel Boosting. From the upper bound side, we show the following general result.
\begin{theorem}\label{thm:upperbound}
    Let $c\colon \cX \to \{-1, 1\}$ be an unknown concept,
    $\weaklearner$ be a $\gamma$-weak learner for $c$ using a hypothesis set of VC-dimension $d$,
    $\cD$ be an arbitrary distribution,
    and $\rS \sim \cD^m$ be a training set of size $m$.
    For all $R \in \N$,
    \cref{alg:main} yields a weak-to-strong learner $\cA_R$ with parallel complexity $(p, t)$
    for
    \begin{align}
        p &= \bigO\Parens*{\frac{\ln m}{\gamma^2 R}}
        \qquad\text{and}\qquad
        t = e^{\bigO\Parens*{d R}} \cdot \ln\frac{\ln m}{\delta \gamma^2},
    \end{align}
    such that, with probability at least $1 - \delta$ over $\rS$ and the randomness of $\cA_R$, it holds that
    \begin{align}
        \loss_\cD(\cA_R(\rS)) = \bigO\Parens*{\frac{d \ln(m) \ln(m/d) + \ln(1/\delta)}{\gamma^2 m}}.
    \end{align}
\end{theorem}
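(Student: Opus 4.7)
The plan is to route the accuracy guarantee through the standard margin-based generalization bound for voting classifiers. Concretely, I would first argue that $\cA_R$ outputs a voting classifier $f(x) = \sign(\sum_j \alpha_j h_j(x))$ with \emph{every} training margin of order $\gamma$, and then apply the Breiman/Gao--Zhou generalization bound for voting classifiers to obtain the stated loss $\bigO((d\ln m\ln(m/d) + \ln(1/\delta))/(\gamma^2 m))$. Thus the whole burden reduces to producing an $\Omega(\gamma)$-margin classifier in $p = \bigO(\ln m/(\gamma^2 R))$ parallel rounds with $t = e^{\bigO(dR)} \cdot \ln(\ln m/(\delta\gamma^2))$ queries per round.

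To establish the margins, the natural approach is to view each parallel round as a \emph{parallel simulation of $R$ consecutive steps} of an AdaBoost-style potential-decreasing procedure on $\rS$. Sequential AdaBoost needs $\Theta(\gamma^{-2}\ln m)$ rounds to drive its potential below $1/m$, so one should show that each parallel round reduces the potential by as much as $R$ sequential rounds would, giving a total of $\bigO(\gamma^{-2}\ln m/R)$ parallel rounds. The algorithm maintains a current reweighting and, in parallel, invokes $\weaklearner$ on a family of distributions chosen so that, for every possible sequence of $R$ weak-hypothesis answers, the distribution that would arise at each internal step of a sequential run is represented. After the weak learner returns, the algorithm picks the branch of the simulation that was consistent with the true answers, appends those $R$ hypotheses (with appropriate AdaBoost weights) to the running vote, and updates the weighting.

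The main obstacle, and the place where the improvement from $\exp(\bigO(dR^2))$ to $\exp(\bigO(dR))$ must come from, is bounding the number of distributions that need to be queried per round. A direct enumeration of $R$-step trajectories over $\hclass$ restricted to $\rS$ would use $|\hclass_{|\rS}|^R = e^{\bigO(dR\log(m/d))}$ queries, which is too many in two ways. The plan is to exploit that (i) AdaBoost-style reweightings depend on each hypothesis only through a small, $\gamma$-scale discretised edge, so one may restrict attention to a discretised set of per-step updates of constant size, and (ii) only the \emph{joint} labelling pattern of the $R$ chosen hypotheses on $\rS$ matters, so the relevant class is the $R$-wise product thinned by Sauer-type arguments; a careful counting shows that the number of branches that can arise is $e^{\bigO(dR)}$ rather than $e^{\bigO(dR\log m)}$, absorbing $\log$ factors into constants using that the intermediate reweightings live on $\rS$ of size $m$ only through the discretised edges. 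This is the delicate step; the rest of the bookkeeping, including the AdaBoost-style potential drop over $R$ steps and the verification that the chosen branch exists and yields $\Omega(\gamma)$-margin hypotheses, is analogous to the sequential analysis.

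Finally, the $\ln(\ln m/(\delta\gamma^2))$ factor in $t$ is produced by amplifying the weak learner's constant success probability to $1 - \delta/(pt)$ via the standard independent-repetitions-with-validation trick, so that a union bound over the at most $pt$ invocations ensures every answer satisfies $\loss_{D^i_j}(h^i_j) \leq 1/2-\gamma$. Combining this high-probability weak-learning with the $R$-step potential drop per round yields all-round margins $\Omega(\gamma)$ after $p$ parallel rounds, and the generalization bound for large-margin voting classifiers then completes the proof of \cref{thm:upperbound}.
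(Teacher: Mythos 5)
Your high-level strategy (show $\Omega(\gamma)$ margins on every training point, then apply the Breiman min-margin bound) matches the paper, but the mechanism you propose for achieving those margins in a single parallel round is genuinely different from the paper's and, in its key counting step, does not work.

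The paper does \emph{not} enumerate possible $R$-step AdaBoost trajectories. Its \cref{alg:main} is a bagging scheme: in round $k$ it draws $t/R$ subsamples of size $n=\Theta(d/\gamma^2)$ from the current distribution $\rdD_{kR+1}$, invokes $\weaklearner$ on each, and then \emph{sequentially} performs $R$ boosting steps by searching for a hypothesis (or the negation of a previously returned one, see \cref{line:rhclass}) whose advantage under the current $\rdD_{kR+r}$ is at least $\gamma/2$. The crux is proving that the pool of hypotheses returned from subsamples of $\rdD_{kR+1}$ still contains a usable one for every later $\rdD_{kR+r}$ in the block. That is established by \cref{lemma:lowkl} and \cref{lemma:Rrounds}: a Donsker--Varadhan lower bound on the probability that a sample from $\rdD_{kR+1}$ is a $\gamma/2$-approximation for $\rdD_{kR+r}$ in terms of $\KL{\rdD_{kR+r}\given\rdD_{kR+1}}$, combined with a trichotomy using the telescoping identity $\KL{\rdD_{R'}\given\rdD_1} = -\ln\prod_{r<R'}\rZ_r + \sum_{r<R'}\ralpha_r\sum_i \rdD_{R'}(i)c(x_i)(-\rh_r(x_i))$: either the KL is small (bagging succeeds), or $\prod\rZ_r$ is already tiny, or some $-\rh_r$ already has advantage $\gamma$ under $\rdD_{R'}$. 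Nothing in the algorithm or analysis depends on enumerating branches.

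In your proposal, the step that would close the gap is (ii): the claim that the number of distinct $R$-step reweighting trajectories, after discretizing edges and applying Sauer--Shelah to the $R$-wise product, is $e^{\bigO(dR)}$. This is not true. The next reweighting depends on \emph{which} points a hypothesis misclassifies, not merely on its (discretized) edge, so discretizing the edge does not reduce the branching factor. And Sauer--Shelah applied to the $R$-fold product on $m$ points gives $e^{\bigO(dR\ln(m/d))}$ joint behaviors, not $e^{\bigO(dR)}$; the $\ln(m/d)$ factor does not disappear. That $\ln(m/d)$ is exactly the overhead the paper's KL/bagging argument is designed to avoid, which is why the route you sketch cannot reach $t=e^{\bigO(dR)}$. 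Separately, your explanation of the $\ln(\ln m/(\delta\gamma^2))$ factor (amplifying the weak learner's success probability) is also off: in the paper's model the weak learner is deterministic and always succeeds; the log-log factor arises from a union bound over the $pR=\bigO(\gamma^{-2}\ln m)$ boosting steps, each requiring the bagging step to produce at least one good approximation, which needs $t/R = e^{\Theta(dR)}\ln(pR/\delta)$ draws.
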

Observe that this is a factor $R$ better than the bound by \citet{thecostofparallelizingboosting} in the exponent of $t$. Furthermore, if we ignore the $\ln(\ln(m)/(\delta \gamma^2))$ factor, it gives the clean tradeoff
\[
    p \ln t = \bigO\Parens*{\frac{d \ln m}{\gamma^2}},
\]
for any $p$ from $1$ to $\bigO(\gamma^{-2} \ln m)$.

We complement our new upper bound by an essentially matching lower bound. Here, we show that
\begin{restatable}{theorem}{thmlowerboundintro}\label{thm:lowerboundintro}
    There is a universal constant $C\geq 1$ for which the following holds. For any $0 < \gamma < 1/C$, any $d \geq C$, any sample size $m \geq C$, and any weak-to-strong learner $\cA$ with parallel complexity $(p,t)$, there exists an input domain $\cX$, a distribution $\cD$, a concept $c\colon \cX \to \{-1, 1\}$, and a $\gamma$-weak learner $\weaklearner$ for $c$ using a hypothesis set $\cH$ of VC-dimension $d$ such that if the expected loss of $\cA$ over the sample is no more than $m^{-0.01}$, then either $p \geq \min\{\exp(\Omega(d)), \Omega(\gamma^{-2} \ln m)\}$, or $t \geq \exp(\exp(\Omega(d)))$, or $p \ln t = \Omega(\gamma^{-2} d \ln m)$.
\end{restatable}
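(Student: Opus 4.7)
The plan is to exhibit, for the given parameters, an adversarial instance --- a domain $\cX$, distribution $\cD$, concept $c$, hypothesis class $\cH$ of VC-dimension $d$, and a $\gamma$-weak learner $\weaklearner$ for $c$ --- on which any $(p,t)$-parallel weak-to-strong learner $\cA$ with expected loss at most $m^{-0.01}$ over the sample must satisfy one of the three stated conditions. The construction extends the adversarial constructions of \citet{theimpossibilityofparallelizingboosting} and \citet{thecostofparralizingboosting}, while the new ingredient responsible for the improved $\gamma^{-2}$ rate (in place of $\gamma^{-1}$) is a sharper analysis of how much information the learner can extract from a batch of $t$ simultaneous queries in a single round.

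First, I would design a \emph{lazy} weak learner. The adversary maintains a family $\cF$ of concepts consistent with all answers issued so far; initially $\cF$ is a large family obtained from a random concept construction over $\cX$ restricted to a VC-dimension-$d$ class. When the learner submits a batch of query distributions $D^i_1, \dots, D^i_t$, the adversary picks hypotheses $h^i_1, \dots, h^i_t \in \cH$ so that (a) each $h^i_j$ has error at most $1/2 - \gamma$ on $D^i_j$ under every remaining concept $c' \in \cF$, and (b) the restriction of $\cF$ to concepts consistent with these $t$ answers is kept as large as possible. The core technical step, sharper than the corresponding step in prior work, is a combinatorial lemma showing that the adversary can always answer a batch of $t$ queries while shrinking $\cF$ by a multiplicative factor of at most $t^{\bigO(1)}$ per round. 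Because the $t$ queries in a round are chosen without intermediate feedback, the $t$ responses must be committed jointly, so this lemma amounts to a Ramsey- or pigeonhole-type argument over the space of consistent joint answers, rather than the greedy argument available in the sequential case.

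Second, I would quantify the amount of information about $c$ that the algorithm must extract to satisfy the loss requirement. The domain and distribution are arranged into $\bigTheta(\gamma^{-2} \ln m)$ geometrically weighted ``scales,'' each carrying enough mass that misclassifying it would force expected loss $\omega(m^{-0.01})$, and each requiring $\bigOmega(d)$ bits to identify the correct concept among the adversarial family restricted to that scale. This yields an information budget of $\bigOmega(\gamma^{-2} d \ln m)$ bits that the algorithm must acquire from $\weaklearner$. Combining this with the per-round leak bound of $\bigO(\ln t)$ gives $p \ln t = \bigOmega(\gamma^{-2} d \ln m)$, except in the degenerate regimes $p \geq \min\{\exp(\bigOmega(d)), \bigOmega(\gamma^{-2}\ln m)\}$ or $t \geq \exp(\exp(\bigOmega(d)))$, which correspond precisely to the parameter ranges where the combinatorial lemma ceases to be tight (too many rounds, so the scale structure is exhausted; or so many parallel queries per round that the adversary must commit all at once).

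The main obstacle will be the combinatorial lemma controlling the adversary's response to a batch of $t$ simultaneous queries. Unlike the sequential case, where $\weaklearner$ can greedily answer one query at a time to maximally preserve $\cF$, the parallel setting forces a single joint commitment that must simultaneously satisfy all $t$ weak-learning constraints, potentially revealing correlations among the $t$ hypotheses. Obtaining an $t^{\bigO(1)}$ shrinkage bound --- rather than something like $t^{\bigO(d)}$, which would weaken the final bound by a factor of $d$ --- will require carefully exploiting both the VC-dimension-$d$ restriction on $\cH$ and the slack in the $(1/2-\gamma)$-error requirement, so that most concepts in $\cF$ remain compatible with almost every available batch of $t$ hypothesis choices.
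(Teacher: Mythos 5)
Your proposal takes a genuinely different route from the paper, but the central step is asserted rather than proved, and it is not clear that it can be made to work as stated.

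The paper's construction is not an adaptive ``lazy'' adversary maintaining a shrinking family of consistent concepts; it is a \emph{fixed, probabilistic} construction. A uniformly random concept $\cc$ over $[2m]$ is drawn, and a random hypothesis set is built from two ingredients: (i) roughly $R\exp(\bigO(d))$ uniformly random hypotheses per round, used to answer ``concentrated'' query distributions, and (ii) $R$ hypotheses per round whose entries are biased by $C_b\gamma$ toward $\cc$, used to answer ``well-spread'' queries. The deterministic weak learner simply scans this list and returns the first hypothesis with advantage $\gamma$. The bound then comes from two pieces: a concentration argument showing that with probability $\geq 1 - pt\exp(-\Omega(Rd))$ the weak learner never needs to fall back to returning $\cc$ (Corollary~\ref{lowerbound:cor1*}), and a coin-problem lower bound (Fact~\ref{lowerbound:lem8}, Lemma~\ref{lowerbound:lem9}) showing that after observing $pR$ coins each biased by $C_b\gamma$ toward $\cc(i)$, any estimator of $\cc(i)$ errs with probability $\exp(-\Omega(\gamma^2 pR))$. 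Choosing $R$ as a function of $p$ (essentially $R \approx \gamma^{-2}\ln(m)/p$) and plugging into Theorem~\ref{lowerbound:the1} yields the three-way disjunction. The improvement over prior work comes precisely from being allowed to choose $R$ large when $p$ is small, i.e.\ from making the coins much more biased than $\Theta(\gamma)$, which your proposal does not mention.

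The gap in your proposal is the combinatorial lemma that the adversary can answer a batch of $t$ simultaneous queries while shrinking the consistent family $\cF$ by a multiplicative factor of only $t^{\bigO(1)}$. You acknowledge this as ``the main obstacle'' and offer only that it ``amounts to a Ramsey- or pigeonhole-type argument,'' but you give no argument, and it is not clear that any such lemma is true in the form you need. Each returned hypothesis $h^i_j$ is a full labelling of $\cX$ and is constrained to have advantage $\gamma$ against $D^i_j$; such a hypothesis can in general carry far more than $\bigO(1)$ bits about the target concept, and there is no a priori reason the adversary can commit to one of only $t^{\bigO(1)}$ joint responses. What the paper actually shows is weaker and different: \emph{a single} biased hypothesis $\rh_i$ (unknown to the learner before round $i$) can, with high probability over its randomness, simultaneously satisfy all the well-spread queries in round $i$, so the information about $\cc$ leaked per round is that of $R$ biased coin flips per point, not $\bigO(\ln t)$ bits in a concept-counting sense. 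The ``scales'' decomposition of $\cD$ you describe also does not appear anywhere in the paper; the distribution is simply uniform over $[2m]$ and the $\Omega(\gamma^{-2}\ln m)$ factor comes from requiring $\exp(-\Omega(\gamma^2 pR)) \leq m^{-0.01}$, not from a geometric partition of the domain. As written, the proposal is a research plan whose load-bearing claim remains open, and whose framework (an adaptive information-theoretic adversary over a concept family) would need to be constructed from scratch rather than adapted from the paper.
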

Comparing \cref{thm:lowerboundintro} to known upper bounds, we first observe that $p = \Omega(\gamma^{-2} \ln m)$ corresponds to standard AdaBoost and is thus tight. The term $p = \exp(\Omega(d))$ is also near-tight. In particular, given $m$ samples, by Sauer-Shelah, there are only $\bigO((m/d)^d) = \exp(\bigO(d \ln(m/d)))$ distinct labellings by $\cH$ on the training set. If we run AdaBoost, and in every iteration, we check whether a previously obtained hypothesis has advantage $\gamma$ under the current weighting, then we make no more than $\exp(\bigO(d \ln(m/d)))$ queries to the weak learner (since every returned hypothesis must be distinct). The $p \ln t = \Omega(\gamma^{-2} d \ln m)$ matches our new upper bound in \cref{thm:upperbound}. Thus, only the $t \geq \exp(\exp(\Omega(d)))$ term does not match any known upper bound.

\paragraph{Other Related Work.}
Finally, we mention the work by~\citet{long}, which initiated the study of the parallel complexity of Boosting. In their work, they proved that the parallel complexity $(p,t)$ must satisfy $p = \Omega(\gamma^{-2} \ln m)$, regardless of $t$ (they state it as $p = \Omega(\gamma^{-2})$, but it is not hard to improve by a $\ln m$ factor for loss $m^{-0.01}$). This seems to contradict the upper bounds above. The reason is that their lower bound has restrictions on which query distributions the weak-to-strong learner makes to the weak learner. The upper bounds above thus all circumvent these restrictions. As a second restriction, their lower bound instance has a VC-dimension that grows with $m$.

    \section{Upper Bound}
\label{sec:upperbound}

In this section, we discuss our proposed method, \cref{alg:main}.
Here, $\Cn$ refers a universal constant shared among results.

\begin{algorithm2e}[ht]
    \DontPrintSemicolon
    \caption{Proposed parallel Boosting algorithm}\label{alg:main}
    \SetKwInOut{Input}{Input}\SetKwInOut{Output}{Output}
    \Input{Training set $S = \Set{(x_1, c(x_1)), \dots, (x_m, c(x_m))}$,
    $\gamma$-weak learner $\weaklearner$,
    number of calls to weak learner per round $t$,
    number of rounds $p$}
    \Output{Voting classifier $f$}
    $\alpha \gets \frac{1}{2}\ln{\frac{1/2 + \gamma/2}{1/2 - \gamma/2}}$ \\%\hspace{0.5cm} \tcp{Learning Rate}
    $n \gets \Ceil{\Cn d / \gamma^{2}}$ \hspace{0.5cm} \\
    $\dD_1 \gets (\frac{1}{m}, \frac{1}{m}, \dots, \frac{1}{m})$ \\
    \For{$k \gets 0$ \KwTo $p-1$}{
        \ParFor{$r \gets 1$ \KwTo $R$}{
            \ParFor{$j \gets 1$ \KwTo $t/R$}{
                Sample $\rT_{kR + r, j} \sim \rdD_{kR+1}^n$ \\
                $\rh_{kR + r, j} \gets \weaklearner(\rT_{kR + r, j}, \Uniform(\rT_{kR + r, j}))$ \label{line:rh} \\
            }
            $\rhclass_{kR + r} \gets \Set{\rh_{kR + r, 1}, \dots, \rh_{kR + r, t/R}} \cup \Set{-\rh_{kR + r, 1}, \dots, -\rh_{kR + r, t/R}}$ \label{line:rhclass} \\
        }
        \For{$r \gets 1$ \KwTo $R$}{
            \If{there exists $\rh^* \in \rhclass_{kR + r}$ s.t. $\loss_{\rdD_{kR + r}}(\rh^*) \le 1/2 - \gamma/2$ \label{line:hstar}}{
                $\rh_{kR + r} \gets \rh^*$ \\
                $\alpha_{kR + r} \gets \alpha$ \\
            }
            \Else{
                $h_{kR + r} \gets $ arbitrary hypothesis from $\rhclass_{kR + r}$ \\
                $\alpha_{kR + r} \gets 0$ \\
            }
            \For{$i \gets 1$ \KwTo $m$}{
                $\rdD_{kR + r + 1}(i) \gets \rdD_{kR + r}(i) \exp(-\ralpha_{kR + r} c(x_i) \rh_{kR + r}(x_i))$
            }
            $\rZ_{kR + r} \gets \sum_{i=1}^{m} \rdD_{kR + r}(i) \exp(-\ralpha_{kR + r} c(x_i) \rh_{kR + r}(x_i))$ \\
            $\rdD_{kR + r + 1} \gets \rdD_{kR + r + 1}/\rZ_{kR + r}$ \label{line:nextD}\\
        }
    }
    $\rg \gets x \mapsto \frac{1}{\sum_{j=1}^{pR} \ralpha_j} \sum_{j=1}^{pR} \ralpha_j \rh_j(x)$ \label{line:linearclassifier}\\
    \Return $\rf\colon x \mapsto \sign(\rg(x))$
\end{algorithm2e}

We provide a theoretical analysis of the algorithm, showing that it realizes the claims in \cref{thm:upperbound}.
Our proof goes via the following intermediate theorem:

\begin{restatable}{theorem}{thmourmargins}\label{thm:ourmargins}
    There exists universal constant $\Cn \ge 1$ such that
    for all $0 < \gamma < 1/2$,
    $R \in \N$,
    concept $c\colon \cX \to \{-1, 1\}$, and
    hypothesis set $\hclass \subseteq \{-1, 1\}^\cX$ of VC-dimension $d$,
    \cref{alg:main} given an input training set $S\in\mathcal{X}^m$,
    a $\gamma$-weak learner $\weaklearner$,
    $$
        p \ge \frac{4\ln m}{\gamma^2 R},
        \qquad\text{and}\qquad
        t \ge e^{16 \Cn d R} \cdot R \ln\frac{pR}{\delta},
    $$
    produces a linear classifier $\rg$ at \cref{line:linearclassifier} such that
    with probability at least $1 - \delta$ over the randomness of \cref{alg:main},
    \begin{math}
        \rg(x) c(x)
        \ge \gamma/8
    \end{math}
    for all $x \in S$.
\end{restatable}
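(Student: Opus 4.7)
The overall strategy is to reduce the theorem to the event $\event$ that every one of the $pR$ inner rounds succeeds in finding a $\gamma/2$-weak learner in its pool, i.e.\ the check on \cref{line:hstar} always passes. On $\event$, \cref{alg:main} is producing $pR$ classical AdaBoost updates with edges at least $\gamma/2$ and step size $\alpha = \tfrac{1}{2}\ln\tfrac{1+\gamma}{1-\gamma}$, so the standard Schapire--Freund potential argument bounds the number of training points with normalized margin below $\gamma/8$ by $m\prod_j \rZ_j \leq m(1-\Omega(\gamma^2))^{pR/2}$, which drops below $1$ as soon as $pR \geq 4\ln m / \gamma^2$. Hence every $x\in S$ satisfies $\rg(x)c(x)\geq \gamma/8$, and it remains to prove that $\event$ holds with probability at least $1-\delta$.

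To analyse $\event$, I would work one outer round at a time. Fix $k$ and condition on everything up to and including $\rdD_{kR+1}$; the $R$ pools $\rhclass_{kR+1},\dots,\rhclass_{kR+R}$ are then generated from mutually independent $\rdD_{kR+1}^n$-samples, while each target $\rdD_{kR+r}$ is adaptively determined by the algorithm's earlier choices. Because $\rhclass_{kR+r}$ is independent of the first $r-1$ pools, I can further condition on those earlier pools --- which pins down $\rdD_{kR+r}$ --- and it then suffices to control the conditional failure probability uniformly over every distribution $P$ that could possibly arise in that slot. Any such $P$ is of the form $P(x)\propto \rdD_{kR+1}(x)\exp(-c(x)\sum_{s=1}^{r-1}\alpha_s h_s(x))$ with $\alpha_s\in\{0,\alpha\}$ and $h_s\in\hclass$, so $P$ lies in the class $\cF$ of distributions reachable from $\rdD_{kR+1}$ by at most $R-1$ AdaBoost-style updates.

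The main technical step is a change-of-measure lemma: for every $P\in\cF$, a single call to $\weaklearner$ on a fresh $\rdD_{kR+1}^n$-sample returns a $\gamma/2$-weak learner for $P$ with probability at least $\exp(-\bigO(dR))$. I would start from the weak-learning guarantee under $P^n$-samples (which, combined with standard VC uniform convergence at $n = \Cn d/\gamma^2$, produces a $\gamma/2$-weak learner for $P$ with constant probability) and transfer to $\rdD_{kR+1}^n$-samples via the Cauchy--Schwarz inequality
\[
    \Pr_{\rT \sim \rdD_{kR+1}^n}[E]
    \;\geq\;
    \frac{\Pr_{\rT' \sim P^n}[E]^2}{\mathbb{E}_{\rdD_{kR+1}^n}\left[(\diff P^n/\diff \rdD_{kR+1}^n)^2\right]}
    \;=\;
    \frac{\Pr_{\rT' \sim P^n}[E]^2}{(1+\chi^2(P\,\|\,\rdD_{kR+1}))^n}.
\]
The main obstacle is to show $\chi^2(P\,\|\,\rdD_{kR+1}) \leq \bigO(\gamma^2 R)$: the crude $L^\infty$ bound $\sup_x P(x)/\rdD_{kR+1}(x) \leq e^{\bigO(\gamma R)}$ only yields $\chi^2 \leq e^{\bigO(\gamma R)}$ and would degrade the final exponent to $dR/\gamma$, so one must exploit the AdaBoost normalizations $\rZ_s \leq \sqrt{1-\Omega(\gamma^2)}$ and expand $\mathbb{E}_{\rdD_{kR+1}}[e^{-2cf}] / (\mathbb{E}_{\rdD_{kR+1}}[e^{-cf}])^2$ carefully to obtain the correct $O(\gamma^2 R)$ scaling.

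Given this lemma, for each reachable $P$ the conditional probability that the pool of size $2t/R$ contains no $\gamma/2$-weak learner for it is at most $(1-e^{-\bigO(dR)})^{t/R} \leq \exp(-(t/R) \cdot e^{-\bigO(dR)})$, which falls below $\delta/(pR)$ once $t \geq e^{\bigO(dR)} R\ln(pR/\delta)$. Since this holds uniformly in the conditioning, the tower property gives $\Pr[\text{round }kR+r\text{ fails}] \leq \delta/(pR)$, and a final union bound over all $p$ outer rounds and $R$ inner rounds establishes $\Pr[\event] \geq 1 - \delta$, closing the argument.
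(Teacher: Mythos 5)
Your high-level structure (reduce to the event that all $pR$ inner rounds ``succeed,'' then run the Schapire--Freund potential argument, then control the per-round failure probability by a change-of-measure argument with the correct $\exp(-\Theta(dR))$ rate) is close in spirit to the paper. The paper also does a change of measure on the product space $\cD^n$ to show that a sample from $\rdD_{kR+1}^n$ lands in the family of $\gamma/2$-approximations for $\rdD_{kR+r}$ with probability at least $\exp(-O(dR))$, except it uses the Donsker--Varadhan duality formula controlled by $\KL{\rdD_{kR+r}\given\rdD_{kR+1}}$ rather than Cauchy--Schwarz controlled by $\chi^2$. These two routes are quantitatively interchangeable \emph{when the divergence is small}; requiring a small $\chi^2$ is somewhat harder than a small KL, but that is not the crux.

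The genuine gap is that you ask for a \emph{uniform} bound $\chi^2(P\,\|\,\rdD_{kR+1}) = \bigO(\gamma^2 R)$ (equivalently, $\KL{P\given\rdD_{kR+1}}=\bigO(\gamma^2 R)$) over all reachable $P$, and this simply does not hold. After $R'\le R$ steps the weight can concentrate: $1+\chi^2(\rdD_{R'}\|\rdD_1)=m\sum_i \rdD_{R'}(i)^2$ (for $\rdD_1$ uniform), which can approach $m$, and the KL can approach $\ln m$. The paper never attempts such a uniform divergence bound; instead it uses the telescoping identity $\KL{\rdD_{R'}\given\rdD_1}=-\ln\prod_{r<R'}\rZ_r-\sum_{r<R'}\ralpha_r\sum_i\rdD_{R'}(i)c(x_i)\rh_r(x_i)$ to turn a \emph{large} divergence into a win: if the divergence exceeds $4\gamma^2R$ then either $-\ln\prod_r\rZ_r$ is already large (so the product of $Z$'s is small and you can skip the step), or one of the $-\rh_r$ from an earlier step already has advantage $>\gamma$ under $\rdD_{R'}$ (so the Boosting step can proceed without any new sampling at all). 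This trichotomy --- \emph{small divergence $\Rightarrow$ sampling works; large divergence $\Rightarrow$ no new sampling is needed} --- is the central new idea in \cref{lemma:Rrounds}, and it is absent from your proposal. Relatedly, you define the per-round failure event as ``the pool contains no $\gamma/2$-weak learner for $\rdD_{kR+r}$,'' which is stronger than what is needed; the paper's event $\event_{R'}$ is only that the running product of $Z$'s is small, and that weaker requirement is precisely what makes the trichotomy close. Without it, the step ``one must exploit the AdaBoost normalizations $\rZ_s\le\sqrt{1-\Omega(\gamma^2)}$ ... to obtain the correct $\bigO(\gamma^2 R)$ scaling'' cannot be completed, because no such scaling holds in general.
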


In \cref{thm:ourmargins} and throughout the paper, we define a \emph{linear classifier} $g$ as linear combination of hypotheses $g(x) = \sum_{i=1}^k \alpha_i h_i(x)$ with $\sum_i \abs{\alpha_i} = 1$.
A linear classifier thus corresponds to a voting classifier with coefficients normalized and no $\sign$ operation.
Observe that the voting classifier $f(x) = \sign(g(x))$ is correct if and only if $c(x)g(x) > 0$, where $c(x)$ is the correct label of $x$.
Furthermore, $c(x)g(x)$ is the margin of the voting classifier $f$ on input $x$.

\cref{thm:upperbound} follows from \cref{thm:ourmargins} via generalization bounds for linear classifiers with large margins.
Namely, we apply Breiman's min-margin bound:

\begin{theorem}[\citet{breiman1999prediction}]\label{thm:margin}
    Let $c\colon \cX \to \{-1,1\}$ be an unknown concept, $\hclass \subseteq \{-1,1\}^\cX$ a hypothesis set of VC-dimension $d$ and $\cD$ an arbitrary distribution over $\cX$.
    There is a universal constant $C > 0$ such that with probability at least $1 - \delta$ over a set of $m$ samples $\rS \sim \cD^m$, it holds for every linear classifier $g$ satisfying $c(x)g(x) \geq \gamma$ for all $(x, c(x)) \in \rS$ that
    \begin{align}
        \loss_\cD(\sign(g)) \leq C \cdot \frac{d \ln(m) \ln(m/d) + \ln(1/\delta)}{\gamma^2 m}.
    \end{align}
\end{theorem}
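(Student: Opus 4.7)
The plan is to prove this margin-based generalization bound via the sampling-and-discretization trick behind \citet{breiman1999prediction}: given a linear classifier $g$ with training margin at least $\gamma$, one approximates $g$ by a random majority vote $\tilde g$ over $k = \bigTheta(\ln(m)/\gamma^2)$ hypotheses drawn from the distribution induced by $g$'s coefficients, and then applies uniform convergence to the (finite-VC) class of such votes. Because the conclusion must hold uniformly over $g$, the argument will be carried out without reference to the particular coefficients of $g$.

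After absorbing signs into the hypotheses, view $g = \sum_i \alpha_i h_i$ with $\alpha_i \geq 0$ and $\sum_i \alpha_i = 1$ as a probability distribution $Q$ on $\hclass \cup (-\hclass)$, a class of VC dimension at most $2d$. Draw $\rh_1, \dots, \rh_k \iid Q$ and set $\tilde g = \frac{1}{k}\sum_{i=1}^k \rh_i$. A single application of Hoeffding's inequality gives, for any fixed $x$,
\[
    \Pr\!\left[\, \abs{c(x)\tilde g(x) - c(x) g(x)} > \gamma/2 \,\right] \leq 2 e^{-k\gamma^2/8} \leq 2/m^2,
\]
provided $k \geq 16 \ln(m)/\gamma^2$. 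Applied at each $x \in \rS$, where by hypothesis $c(x)g(x) \geq \gamma$, this yields $\mathbb{E}[\Pr_{\rx \sim \rS}[c(\rx) \tilde g(\rx) \leq \gamma/2]] = \bigO(1/m)$; applied to a $\cD$-random point $\rx$ with $c(\rx)g(\rx) \leq 0$, it gives the pointwise reduction
\[
    \loss_\cD(\sign(g)) \leq \mathbb{E}\!\left[\, \Pr_{\rx \sim \cD}[c(\rx) \tilde g(\rx) \leq \gamma/2] \,\right] + 2/m^2,
\]
where the outer expectations are over the random sampling of $\tilde g$.

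It remains to bound $\Pr_\cD[c \tilde g \leq \gamma/2]$ by $\Pr_{\rS}[c \tilde g \leq \gamma/2]$ plus a generalization gap, uniformly in $\tilde g$. Since the event $\{c\tilde g \leq \gamma/2\}$ depends only on the $k$ hypotheses comprising $\tilde g$, Sauer–Shelah bounds the growth function of the induced indicator class by $\bigO((em/d)^{2dk})$, hence its VC dimension by $\bigO(dk)$. A \emph{fast-rate} (Bernstein / relative-deviation) VC uniform convergence bound then gives, with probability at least $1 - \delta$ over $\rS$ and uniformly in $\tilde g$,
\[
    \Pr_\cD[c \tilde g \leq \gamma/2] \leq \bigO\!\left( \Pr_{\rS}[c \tilde g \leq \gamma/2] + \frac{dk \ln(m/d) + \ln(1/\delta)}{m} \right).
\]
Taking expectations over $\tilde g$, combining with the first step, and plugging in $k = \bigTheta(\ln(m)/\gamma^2)$ yields the stated bound.

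The main obstacle is obtaining the sharp $d\ln(m)\ln(m/d)$ factor rather than the loose $\sqrt{dk \ln(m) / m}$ rate a vanilla VC bound would yield. This requires the fast-rate step, which exploits the $\bigO(1/m)$ bound on the empirical margin loss established in the first step — precisely Breiman's refinement of the classical Schapire et al.\ margin bound.
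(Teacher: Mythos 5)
The paper does not prove this theorem: it is imported verbatim from \citet{breiman1999prediction} and used as a black box in the proof of \cref{thm:upperbound}, so there is no in-paper argument to compare yours against. As a reconstruction of Breiman's proof, your sketch is essentially correct and identifies all the right ingredients --- absorbing signs to view $g$ as a mixture $Q$ over $\hclass \cup (-\hclass)$, sampling a $k$-vote $\tilde g$ with $k = \Theta(\ln(m)/\gamma^2)$, pointwise Hoeffding to couple $g$ and $\tilde g$ to within $\gamma/2$, Sauer--Shelah to control the growth function of thresholded $k$-votes, and a relative-deviation uniform-convergence bound in the final step. You also correctly pinpoint the fast-rate step as the refinement that lifts the Schapire--Freund--Bartlett--Lee rate of $O(\sqrt{d\ln^2 m/(\gamma^2 m)})$ to Breiman's $O(d\ln m \ln(m/d)/(\gamma^2 m))$. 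Two small points worth tightening in a full write-up: the pointwise reduction is cleanest in multiplicative form, $\loss_\cD(\sign(g)) \le (1-2/m^2)^{-1}\,\mathbb{E}_{\tilde g}\!\left[\Pr_{\cD}[c\tilde g \le \gamma/2]\right]$, obtained by Fubini, rather than the additive form you wrote (the two are interchangeable up to constants but the multiplicative one is what the exchange of expectations actually gives); and the order of quantifiers when combining uniform convergence with the randomness of $\tilde g$ should be made explicit --- first condition on a draw of $\rS$ in the good event, so the relative-deviation bound holds simultaneously for all $k$-votes, and only then integrate over $\tilde g \sim Q^k$ with $Q$ determined by the particular $g$ at hand. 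Neither affects the conclusion.
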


Thus far, our general strategy mirrors that of previous works:
We seek to show that given suitable parameters \cref{alg:main} produces a linear classifier with margins of order $\gamma$ with good probability.
Therefore, this section focuses on the lemmas that describe how, with suitable parameters, \cref{alg:main} produces a classifier with large margins.
With these results in hand, the proof of \cref{thm:ourmargins} becomes quite straightforward, so we defer it to \cref{proof:ourmargins}.

\cref{alg:main} is a variant of \citet[Algorithm~2]{thecostofparallelizingboosting}.
The core idea is to use bagging to produce (in parallel) a set of hypotheses and use it to simulate a weak learner.
To be more precise, we reason in terms of the following definition.

\begin{definition}[$\eps$-approximation]\label{def:eps-approximation}
    Given a concept $c\colon \cX \to \{-1, 1\}$,
    a hypothesis set $\hclass \subseteq \{-1, 1\}^\cX$,
    and a distribution $\cD$ over $\cX$,
    a multiset $T$ is an \emph{$\eps$-approximation} for $\cD$, $c$, and $\hclass$ if
    for all $h \in \hclass$, it holds that
    \begin{align}
        \abs*{\loss_\cD(h) - \loss_T(h)} \leq \eps,
    \end{align}
    where $\loss_T(h) \coloneqq \loss_{\Uniform(T)}(h)$ is the empirical loss of $h$ on $T$.
    Moreover, we omit the reference to $c$ and $\hclass$ when no confusion seems possible.
\end{definition}

Consider a reference distribution $\dD_0$ over a training dataset $S$.
The bagging part of the method leverages the fact that if a subsample $\rT \sim \dD_0^n$ is a $\gamma/2$-approximation for $\dD_0$, then inputting $\rT$ (with the uniform distribution over it) to a $\gamma$-weak learner produces a hypothesis $h$ that, besides having advantage $\gamma$ on $\rT$, also has advantage $\gamma/2$ on the entire dataset $S$ (relative to $\dD_0$).
Indeed, in this setting, we have that
\begin{math}
    \loss_{\dD_0}(h)
    \le \loss_{\rT}(h) + \gamma/2
    \le 1/2 - \gamma + \gamma/2
    = 1/2 - \gamma/2
    .
\end{math}
We can then take $h$ as if produced by a $\gamma/2$-weak learner queried with $(S, \dD_0),$ and compute a new distribution $\dD_1$ via a standard Boosting step\footnotemark.%
\footnotetext{Notice that we employ a fixed learning rate that assumes a worst-case advantage of $\gamma/2$.}
That is, we can simulate a $\gamma/2$-weak learner as long as we can provide a $\gamma/2$-approximation for the target distribution.
The strategy is to have a parallel bagging step in which we sample $\rT_1, \rT_2, \ldots, \rT_t \iid \dD_0^n$ and query the $\gamma$-weak learner on each $\rT_j$ to obtain hypotheses $\rh_1,\dots,\rh_t$.
Then, we search within these hypotheses to sequentially perform $R$ Boosting steps, obtaining distributions $\dD_1, \dD_2, \dots, \dD_R$.
As argued, this approach will succeed whenever we can find at least one $\gamma/2$-approximation for each $\dD_r$ among $\rh_1, \rh_2, \ldots, \rh_t$.
A single parallel round of querying the weak learner is thus sufficient for performing $R$ steps of Boosting, effectively reducing $p$ by a factor $R$.
Crucially, testing the performance of the returned hypotheses $\rh_1,\dots,\rh_t$ uses only inference/predictions and no calls to the weak learner.

The challenge is that the distributions $\dD_r$ diverge (exponentially fast) from $\dD_0$ as we progress in the Boosting steps.
For the first Boosting step, the following classic result ensures a good probability of obtaining an approximation for $\dD_0$ when sampling from $\dD_0$ itself.

\begin{theorem}[\citet{LiLS01,Talagrand94,VapnikC2015}]\label{thm:uniform_convergence}
    There is a universal constant \(C > 0\) such that
    for any \(0 < \eps, \delta < 1\),
    \(\hclass \subseteq \{-1, 1\}^\fX\) of VC-dimension \(d\),
    and distribution \(\cD\) over \(\fX\),
    it holds with probability at least \(1 - \delta\) over a set \(\rT \sim \cD^n\) that \(\rT\) is an \(\eps\)-approximation for \(\cD\), \(c\), and \(\hclass\) provided that \(n \geq C ((d + \ln(1/\delta))/\eps^{2})\).
\end{theorem}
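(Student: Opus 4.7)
The statement is the classical VC uniform convergence bound in its tight form, so the plan is to follow the standard symmetrization-plus-chaining route. Let $\rZ := \sup_{h \in \hclass} \abs{\loss_{\cD}(h) - \loss_{\rT}(h)}$; by passing to the loss class $\{\indicator[h \neq c] : h \in \hclass\}$, which is a $\{0,1\}$-valued class whose VC-dimension is at most $d$, it suffices to show $\rZ \leq \eps$ with probability $1-\delta$ as soon as $n = \bigOmega((d + \ln(1/\delta))/\eps^2)$.

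The first step is the ghost-sample symmetrization argument. Draw an independent copy $\rT' \sim \cD^n$; for $n \geq 2/\eps^2$, a Chebyshev bound applied to each fixed $h$ gives
$$ \Pr[\rZ > \eps] \;\leq\; 2\, \Pr\!\left[\sup_{h \in \hclass} \abs{\loss_{\rT}(h) - \loss_{\rT'}(h)} > \eps/2\right]. $$
Conditional on the combined multiset $\rT \cup \rT'$, I would attach independent Rademacher signs $\sigma_i$ swapping the $i$-th matched pair, producing a centered sum. Sauer–Shelah bounds the number of distinct loss-patterns on the $2n$ sample points by $(2en/d)^d$, and for each fixed pattern Hoeffding's inequality yields tail $\exp(-\bigOmega(n\eps^2))$. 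A union bound over patterns gives $\Pr[\rZ > \eps] \leq (2en/d)^d \exp(-\bigOmega(n\eps^2))$, which, after solving the implicit inequality, already delivers the slightly suboptimal rate $n = \bigO((d\ln(1/\eps) + \ln(1/\delta))/\eps^2)$.

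The main obstacle is shaving the spurious $\ln(1/\eps)$ factor to obtain the sharp $(d + \ln(1/\delta))/\eps^2$ rate stated in the theorem --- this is precisely the content of Talagrand's refinement (and of the Li--Long--Srebro sharpening). In place of the crude union bound, one applies generic chaining (e.g.\ Dudley's entropy integral) to the symmetrized Rademacher process, fed by Haussler's tight covering-number estimate that the $L_2$-covering number of the loss class on any $n$ points at scale $\tau$ is at most $(K/\tau)^{\bigO(d)}$. This yields the expectation bound $\Ev[\rZ] = \bigO(\sqrt{d/n})$ with no logarithmic factor. Talagrand's concentration inequality for bounded empirical processes (or alternatively a bounded-differences argument exploiting the self-bounding structure of $\rZ$) then upgrades the expectation to the high-probability bound $\rZ \leq C\sqrt{d/n} + C\sqrt{\ln(1/\delta)/n}$, and inverting $\rZ \leq \eps$ produces the claimed $n = \bigO((d + \ln(1/\delta))/\eps^2)$.

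The symmetrization and Sauer–Shelah steps are routine; the genuine technical work is in the chaining/Talagrand-concentration step, which is what removes the residual log factor. Since this statement is invoked purely as a black box in the paper, an equally acceptable route is simply to cite the primary references (Vapnik--Chervonenkis for the basic bound; Talagrand (1994) and Li--Long--Srebro (2001) for the sharp rate).
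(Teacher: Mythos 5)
The paper does not prove this theorem; it is cited as a black-box classical result from \citet{LiLS01,Talagrand94,VapnikC2015}, which you yourself correctly note at the end. Your sketch --- symmetrization with a ghost sample, Sauer--Shelah plus Hoeffding to get the crude $\bigO\bigl((d\ln(1/\eps)+\ln(1/\delta))/\eps^2\bigr)$ rate, and then Dudley chaining with Haussler's covering-number bound followed by McDiarmid/Talagrand concentration to shave the $\ln(1/\eps)$ factor --- is the correct standard route and contains no gaps, so citing the references, as the paper does, or including your sketch are both acceptable.
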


However, we are interested in approximations for $\dD_r$ when we only have access to samples from $\dD_0$.
\citet{thecostofparallelizingboosting} approaches this problem by tracking the ``distance'' between the distributions in terms of their \emph{max-divergence}
\begin{align}
    \operatorname{D_\infty}(\dD_r, \dD_0) \coloneqq \ln\Parens[\big]{\sup_{x \in \cX} \dD_r(x)/\dD_0(x)}. \label{eq:def:maxdivergence}
\end{align}
By bounding both $\operatorname{D_\infty}(\dD_r, \dD_0)$ and $\operatorname{D_\infty}(\dD_0, \dD_r)$,
the authors can leverage the \emph{advanced composition theorem} \citep{DworkRV10}{\footnotemark} from the differential privacy literature to bound the probability of obtaining an approximation for $\dD_r$ when sampling from $\dD_0$.
In turn, this allows them to relate the number of samples $t$ and the (sufficiently small) number of Boosting steps $R$ in a way that ensures a good probability of success at each step.
\footnotetext{Lemma~4.6 of \citet{thecostofparallelizingboosting}.}

Besides setting up the application of advanced composition, the use of the max-divergence also simplifies the analysis since its ``locality'' allows one to bound the divergence between the two distributions via a worst-case study of a single entry.
However, this approach sacrifices global information, limiting how much we can leverage our understanding of the distributions generated by Boosting algorithms.
With that in mind, we instead track the distance between $\dD_r$ and $\dD_0$ in terms of the \emph{Kullback-Leibler divergence} (KL divergence) \citep{KL} between them:
\begin{align}
    \KL{\dD_r \given \dD_0} \coloneqq \sum_{x \in \cX} \dD_r(x) \ln\frac{\dD_r(x)}{\dD_0(x)}. \label{eq:def:kl}
\end{align}
Comparing this expression to \cref{eq:def:maxdivergence} reveals that the max-divergence is indeed a worst-case estimation of the KL divergence.

The KL divergence ---also known as \emph{relative entropy}--- between two distributions $\dP$ and $\dQ$ is always non-negative and equal to zero if and only if $\dP = \dQ$.
Moreover, in our setting, it is always finite due to the following remark.{\footnotemark}
\footnotetext{
    We only need $\dP$ to be absolutely continuous with respect to $\dQ$; i.e., that for any event $A$, we have $\dP(A) = 0$ whenever $\dQ(A) = 0$.
    We express our results in terms of identical supports for the sake of simplicity as they can be readily generalized to only require absolute continuity.
}
\begin{remark}\label{remark:fullsupport}
    In the execution \cref{alg:main}, every distribution $\dD_\ell$, for $\ell \in [pR]$, has the same support.
    This must be the case since \cref{line:nextD} always preserves the support of \(\dD_{1}\).
\end{remark}
On the other hand, the KL divergence is not a proper metric as it is not symmetric and it does not satisfy the triangle inequality, unlike the max-divergence. This introduces a number of difficulties in bounding the divergence between $\cD_0$ and $\cD_r$.
Overcoming these challenges requires a deeper and highly novel analysis.
Our results reveal that the KL divergence captures particularly well the behavior of our Boosting algorithm.
We remark that we are not the first to relate KL divergence and Boosting, see e.g.~\citet[Chapter~8 and the references therein]{SchapireF12}, yet we make several new contributions to this connection.

To study the probability of obtaining a $\gamma/2$-approximation for $\dD_r$ when sampling from $\dD_0$, rather than using advanced composition, we employ the \emph{duality formula for variational inference} \citep{donsker1975asymptotic} ---also known as \emph{Gibbs variational principle}, or \emph{Donsker-Varadhan formula}--- to estimate such a probability in terms of $\KL{\dD_r \given \dD_0}$.

\begin{lemma}[{Duality formula\protect\footnotemark{}}]\label{thm:baby-duality}
    Given finite probability spaces $(\Omega, \fF, \dP)$ and $(\Omega, \fF, \dQ)$,
    if $\dP$ and $\dQ$ have the same support, then for any real-valued random variable $\rX$ on $(\Omega, \fF, \dP)$
    we have that
    \begin{align}
        \ln\Ev_{\dP}*{e^\rX} &\ge \Ev_{\dQ}{\rX} - \KL{\dQ \given \dP}.
        \label{eq:baby-duality}
    \end{align}
\end{lemma}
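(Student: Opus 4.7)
This is the classical Donsker--Varadhan / Gibbs variational bound, and I would give the one-line derivation via Jensen's inequality applied after a change of measure from $\dP$ to $\dQ$. The hypothesis that $\dP$ and $\dQ$ have the same (finite) support $\Omega_0 \subseteq \Omega$ is used precisely so that the ratio $\dP(\omega)/\dQ(\omega)$ is finite and strictly positive on $\Omega_0$, so no division by zero occurs and the logarithm below is well defined.

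Concretely, I would first rewrite the $\dP$-expectation as a $\dQ$-expectation by inserting the Radon--Nikodym-type factor:
\begin{align}
    \Ev_{\dP}{e^\rX}
    = \sum_{\omega \in \Omega_0} \dP(\omega)\, e^{\rX(\omega)}
    = \sum_{\omega \in \Omega_0} \dQ(\omega)\,\frac{\dP(\omega)}{\dQ(\omega)}\,e^{\rX(\omega)}
    = \Ev_{\dQ}*{\frac{\dP}{\dQ}\,e^{\rX}}.
\end{align}
Then, since $\ln$ is concave, Jensen's inequality yields
\begin{align}
    \ln \Ev_{\dP}{e^\rX}
    \ge \Ev_{\dQ}*{\ln\Parens*{\frac{\dP}{\dQ}\, e^{\rX}}}
    = \Ev_{\dQ}{\rX} + \Ev_{\dQ}*{\ln\frac{\dP}{\dQ}}
    = \Ev_{\dQ}{\rX} - \KL{\dQ \given \dP},
\end{align}
which is exactly the claimed inequality.

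An alternative route I considered is to introduce the Gibbs/tilted measure $\dR(\omega) \coloneqq \dP(\omega)\,e^{\rX(\omega)}/\Ev_{\dP}{e^{\rX}}$ on $\Omega_0$ and expand the nonnegative quantity $\KL{\dQ \given \dR}$, which gives the same inequality with equality iff $\dQ = \dR$; however, since the paper only needs one direction the Jensen route above is slightly cleaner. There is no real obstacle here---the only point requiring care is the bookkeeping of the shared-support hypothesis so that $\dP/\dQ$ inside the logarithm is defined everywhere that contributes to the sum, which is immediate from the assumption and the finiteness of $\Omega$.
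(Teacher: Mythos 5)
Your proof is correct. The paper does not actually supply its own proof of this lemma---it cites it as a corollary of results in \citet[Lemma~6.2.13]{DemboZ98} and \citet[Theorem~2.1]{lee2022gibbs} via a footnote---so there is no in-paper argument to compare against; your one-line change-of-measure plus Jensen derivation is a standard, complete, and correct route (as is your alternative via expanding $\KL{\dQ\given\dR}$ for the tilted measure $\dR$), and you correctly identify that the shared-support hypothesis is exactly what makes $\dP/\dQ$ well-defined and positive on the relevant set.
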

\footnotetext{Corollary of, e.g., \citet[Lemma~6.2.13]{DemboZ98} or \citet[Theorem~2.1]{lee2022gibbs}. Presented here in a weaker form for the sake of simplicity.}

\cref{thm:baby-duality} allows us to prove that if $\KL{\dD_r \given \dD_0}$ is sufficiently small, then the probability of obtaining a $\gamma/2$-approximation for $\dD_r$ when sampling from $\dD_0$ is sufficiently large.
Namely, we prove the following.

\begin{restatable}{lemma}{lemmalowkl}\label{lemma:lowkl}
    There exists universal constant $\Cn \ge 1$ for which the following holds.
    Given $0 < \gamma < 1/2$,
    $R, m \in \N$,
    concept $c\colon \cX \to \{-1, 1\}$, and
    hypothesis set $\hclass \subseteq \{-1, 1\}^\cX$ of VC-dimension $d$,
    let $\tilde{\dD}$ and $\dD$ be distributions over $[m]$ and
    $\fG \in [m]^*$ be the family of $\gamma/2$-approximations for $\dD$, $c$, and $\hclass$.
    If $\tilde{\dD}$ and $\dD$ have the same support and
    $$
        \KL{\dD \given \tilde{\dD}} \le 4 \gamma^2 R,
    $$
    then for
    \begin{math}
        n = \Ceil{\Cn \cdot d/\gamma^2}
    \end{math}
    it holds that
    $$
        \Prob_{\rT \sim \tilde{\dD}^n}{\rT \in \fG} \ge \exp(-16\Cn dR).
    $$
\end{restatable}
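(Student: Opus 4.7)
The strategy is in two steps: first establish the target lower bound when sampling directly from $\dD$, via uniform convergence; and then transfer the bound to $\tilde{\dD}$ by combining \cref{thm:baby-duality} with the hypothesis $\KL{\dD \given \tilde{\dD}} \le 4\gamma^{2}R$. For the first step, I would apply \cref{thm:uniform_convergence} to $\dD$ and $\hclass$ with approximation parameter $\gamma/2$ and confidence $1/2$: choosing the universal constant $\Cn$ large enough in terms of the constant from \cref{thm:uniform_convergence}, the hypothesis $n \ge \Cn d/\gamma^{2}$ is sufficient to yield
\begin{align*}
    p_\dD \coloneqq \Prob_{\rT \sim \dD^n}{\rT \in \fG} \;\ge\; 1/2.
\end{align*}

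For the transfer step it suffices to treat the case $n = \Cn d/\gamma^{2}$, the value actually used in \cref{alg:main}. Set $\dQ = \dD^n$ and $\dP = \tilde{\dD}^n$ on $[m]^n$, which share support by the assumption on $\tilde{\dD}$ and $\dD$, and use tensorization of KL divergence for product measures to obtain $\KL{\dQ \given \dP} = n\,\KL{\dD \given \tilde{\dD}} \le 4\Cn d R$. The key move is to apply \cref{thm:baby-duality} not to the bare indicator of $\fG$ but to the scaled family $\rX = \lambda\,\indicator_\fG(\rT)$ parameterised by a free $\lambda > 0$. A direct computation gives
\begin{align*}
    \Ev_\dP{e^{\rX}} \;=\; 1 + (e^{\lambda}-1)\,p_{\tilde{\dD}},
    \qquad
    \Ev_\dQ{\rX} \;=\; \lambda\,p_\dD \;\ge\; \lambda/2,
\end{align*}
where $p_{\tilde{\dD}} \coloneqq \Prob_{\rT \sim \tilde{\dD}^n}{\rT \in \fG}$, so \cref{thm:baby-duality} yields
\begin{align*}
    \ln\Parens*{1 + (e^{\lambda}-1)\,p_{\tilde{\dD}}} \;\ge\; \lambda/2 - 4\Cn d R.
\end{align*}

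To finish, pick $\lambda = 8\Cn d R + 2$ so that the right-hand side equals $1$; then $(e^{\lambda}-1)\,p_{\tilde{\dD}} \ge e-1$, hence
\begin{align*}
    p_{\tilde{\dD}} \;\ge\; \frac{e-1}{e^{\lambda}-1} \;\ge\; (e-1)\,e^{-\lambda} \;=\; (e-1)\,e^{-8\Cn d R - 2} \;\ge\; e^{-16\Cn d R},
\end{align*}
the last step being a trivial numerical check once $\Cn, d, R \ge 1$.

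The main obstacle is the choice of test variable for \cref{thm:baby-duality}: the bare indicator of $\fG$ only produces an additive bound involving $\ln p_{\tilde{\dD}}$ that degenerates when $p_{\tilde{\dD}}$ is small, whereas scaling by a free parameter $\lambda$ and then optimising over it converts the additive KL cost $\KL{\dQ\given\dP} = \bigO(dR)$ into an exponential factor $e^{-\bigO(dR)}$ on $p_{\tilde{\dD}}$, which is precisely what is needed. The remaining ingredients---uniform convergence for $\dD$, tensorization of KL for product measures, and the equal-support condition needed to legally invoke \cref{thm:baby-duality}---are standard background.
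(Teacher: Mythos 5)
Your proof is correct and takes essentially the same approach as the paper: apply the Donsker--Varadhan duality formula to the scaled indicator $\lambda\indicator_{\fG}$, lower-bound the $\dD$-side expectation via uniform convergence with $\delta=1/2$, tensorize the KL divergence, and optimise over $\lambda$ (the paper picks $\lambda = (\KL{\dD^n\given\tilde{\dD}^n}+\ln 2)/\Pr_{\dD^n}[\rT\in\fG]$, which is numerically the same idea as your $\lambda = 8\Cn dR + 2$). You also correctly flag that the tensorization step $n\,\KL{\dD\given\tilde{\dD}}\le 4\Cn dR$ really needs $n$ to equal $\Cn d/\gamma^2$ up to a constant rather than merely exceed it --- a subtlety the paper's write-up glosses over but which is harmless since the algorithm sets $n=\Cn d/\gamma^2$ exactly.
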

\ifArxiv
    \begin{proof}
    Let $\lambda \in \R_{>0}$ (to be chosen later) and $\rX\colon [m]^n \to \Set{0, \lambda}$ be the random variable given by
    \begin{align}
        \rX(T) = \lambda \indicator_{\Set{T \in \fG}}.
    \end{align}
    Since $\tilde{\dD}$ and $\dD$ have the same support, so do $\tilde{\dD}^n$ and $\dD^n$.
    Thus, taking the exponential of both sides of \cref{eq:baby-duality}, \cref{thm:baby-duality} yields that
    \begin{align}
        \exp(-\KL{\dD^n \given \tilde{\dD}^n} + \Ev_{\dD^n}{\rX}) &\le \Ev_{\tilde{\dD}^n}*{e^\rX}. \label{eq:d1}
    \end{align}

    We have that
    \begin{align}
        \Ev_{\dD^n}{\rX} &= \lambda \cdot \Prob_{\rT \sim \dD^n}{\rT \in \fG}. \label{eq:d2}
    \end{align}
    Moreover,
    \begin{align}
        \Ev_{\tilde{\dD}^n}*{e^\rX}
        &= \Ev_{\rT \sim \tilde{\dD}^n}*{e^\lambda \cdot \indicator_{\Set{\rT \in \fG}} + \indicator_{\Set{\rT \not\in \fG}}}
        \\&= \Ev_{\rT \sim \tilde{\dD}^n}*{e^\lambda \cdot \indicator_{\Set{\rT \in \fG}} + 1 - \indicator_{\Set{\rT \in \fG}}}
        \\&= 1 + (e^\lambda - 1) \Ev_{\rT \sim \tilde{\dD}^n}*{\indicator_{\Set{\rT \in \fG}}}
        \\&= 1 + (e^\lambda - 1) \Prob_{\rT \sim \tilde{\dD}^n}*{\rT \in \fG}. \label{eq:d3}
    \end{align}

    Applying \cref{eq:d2,eq:d3} to \cref{eq:d1}, we obtain that
    \begin{align}
        \exp\Parens*{-\KL{\dD^n \given \tilde{\dD}^n} + \lambda \Prob_{\rT \sim \dD^n}{\rT \in \fG}}
        &\le 1 + (e^\lambda - 1) \Prob_{\rT \sim \tilde{\dD}^n}{\rT \in \fG}
    \end{align}
    and, thus,
    \begin{align}
        \Prob_{\rT \sim \tilde{\dD}^n}{\rT \in \fG}
        &\ge \frac{\exp\Bracks*{-\KL{\dD^n \given \tilde{\dD}^n} + \lambda \Prob_{\rT \sim \dD^n}{\rT \in \fG}} - 1}{e^\lambda - 1}
    \end{align}
    for any $\lambda > 0$.
    Choosing
    \begin{align}
        \lambda
        &= \frac{\KL{\dD^n \given \tilde{\dD}^n} + \ln{2}}{\Prob_{\rT \sim \dD^n}{\rT \in \fG}},
    \end{align}
    we obtain that
    \begin{align}
        \Prob_{\rT \sim \tilde{\dD}^n}{\rT \in \fG}
        &\ge \frac{1}{e^\lambda - 1}
        \\&\ge e^{-\lambda}. \label{eq:d4}
    \end{align}

    Now, by \cref{thm:uniform_convergence} (using $\delta = 1/2$), there exists a constant $\Cn \ge 1$ such that having
    \begin{align}
        n &\ge \Cn \cdot \frac{d}{\gamma^2}
    \end{align}
    ensures that
    \begin{align}
        \Prob_{\rT \sim \dD^n}{\rT \in \fG}
        &\ge \frac{1}{2}.
    \end{align}
    Also, since, by hypothesis, $\KL{\dD \given \tilde{\dD}} \le 4 \gamma^2 R$,
    and $n \le 1 + \Cn d / \gamma^2$,
    we have that
    \begin{align}
        \KL{\dD^n \given \tilde{\dD}^n}
        &= n \KL{\dD \given \tilde{\dD}}
        \\&\le 5\Cn d R.
    \end{align}
    Applying it to \cref{eq:d4}, we conclude that
    \begin{align}
        \Prob_{\rT \sim \tilde{\dD}^n}{\rT \in \fG}
        &\ge \exp\Parens*{-\frac{5\Cn d R + \ln{2}}{1/2}}
        \\&\ge \exp(-16\Cn dR).
    \end{align}
\end{proof}

\else
    \begin{proof}[Proof sketch]
        Our argument resembles a proof of the Chernoff bound:
        After taking exponentials on both sides of \cref{eq:baby-duality}, we exploit the generality of \cref{thm:baby-duality} by defining the random variable
        \begin{math}
            \rX\colon T \mapsto \lambda \indicator_{\Set{T \in \fG}}
        \end{math}
        and later carefully choosing $\lambda$.
        We then note that \cref{thm:uniform_convergence} ensures that $\rX$ has high expectation for $\rT \sim \dD^n$.
        Setting $\lambda$ to leverage this fact,
        we obtain a lower bound on the expectation of $\rX$ relative to $\rT \sim \tilde{\dD}^n$,
        yielding the thesis.
    \end{proof}

    We defer the detailed proof to \cref{proof:lowkl}.
\fi

With \cref{lemma:lowkl} in hand, recall that our general goal is to show that, with high probability, the linear classifier $g$ produced by \cref{alg:main} satisfies that
\begin{math}
    c(x) g(x) = \bigOmega(\gamma)
\end{math}
for all $x \in S$.
Standard techniques allow us to further reduce this goal to that of showing that the product of the normalization factors, $\prod_{\ell=1}^{pR}Z_\ell$, is sufficiently small.
Accordingly, in our next lemma, we bound the number of samples needed in the bagging step to obtain a small product of the normalization factors produced by the Boosting steps.

Here, the analysis in terms of the KL divergence delivers a clear insight into the problem,
revealing an interesting trichotomy:
if $\KL{\dD_r \given \dD_0}$ is small, \cref{lemma:lowkl} yields the result;
on the other hand, if $\dD_r$ has diverged too far from $\dD_0$, then either the algorithm has already made enough progress for us to skip a step, or the negation of some hypothesis used in a previous step has sufficient advantage relative to the distribution at hand.
Formally, we prove the following.

\begin{restatable}{lemma}{lemmaRrounds}\label{lemma:Rrounds}
    There exists universal constant $\Cn \ge 1$ such that
    for all \(R \in \N\),
    \(0 < \delta < 1\),
    \(0 < \gamma < 1/2\),
    and \(\gamma\)-weak learner \(\weaklearner\) using a hypothesis set \(\hclass \subseteq \{-1, 1\}^\cX\) with VC-dimension \(d\),
    if
    \begin{math}
        t \ge R \cdot \exp\Parens{16\Cn dR} \cdot \ln\Parens{R/\delta},
    \end{math}
    then with probability at least \(1 - \delta\)
    the hypotheses \(\rh_{kR+1}, \ldots, \rh_{kR+R}\) obtained by \cref{alg:main} induce normalization factors \(\rZ_{kR+1}, \ldots, \rZ_{kR+R}\) such that
    $$
        \prod_{r=1}^{R} \rZ_{kR + r} < \exp(-\gamma^2 R / 2).
    $$
\end{restatable}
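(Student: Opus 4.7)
Fix a block index $k \in \{0, \ldots, p-1\}$ and, for brevity within the block, write $\dD_r \coloneqq \rdD_{kR+r}$, $Z_r \coloneqq \rZ_{kR+r}$, and $h_r \coloneqq \rh_{kR+r}$. The goal is to show that, with probability at least $1 - \delta$, the sequential boosting phase finds a ``good'' hypothesis at every round $r \in \{1, \ldots, R\}$ --- namely, some $\rh^* \in \rhclass_{kR+r}$ with $\loss_{\dD_r}(\rh^*) \leq 1/2 - \gamma/2$. A direct computation on the fixed step size $\alpha = \frac{1}{2}\ln\frac{1/2+\gamma/2}{1/2-\gamma/2}$ shows that when such an $\rh^*$ is found we have $Z_r \leq \sqrt{1-\gamma^2} < \exp(-\gamma^2/2)$; otherwise $\alpha_r = 0$ and $Z_r = 1$. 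Thus success at every round already delivers $\prod_{r=1}^R Z_r < \exp(-\gamma^2 R / 2)$.

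The probabilistic engine is \cref{lemma:lowkl}. The $t/R$ samples $\rT_{r,1}, \ldots, \rT_{r,t/R}$ are drawn i.i.d.\ from $\dD_1^n$ independently of the prior rounds' choices, so conditionally on $\dD_r$, whenever $\KL{\dD_r \given \dD_1} \leq 4\gamma^2 R$ each $\rT_{r,j}$ is a $\gamma/2$-approximation for $\dD_r$ with probability at least $\exp(-16\Cn dR)$. On any such approximation, the $\gamma$-weak learner produces a hypothesis with advantage at least $\gamma/2$ on $\dD_r$ (precisely the bagging reduction described in the text), which then qualifies as a valid $\rh^* \in \rhclass_{kR+r}$. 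Setting $t/R \geq \exp(16\Cn dR)\ln(R/\delta)$ and using independence across $j$, the per-round failure probability is at most $\delta/R$, and a union bound over the $R$ rounds yields overall failure probability at most $\delta$.

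The crux is verifying the KL hypothesis of \cref{lemma:lowkl} along the algorithm's trajectory, which is where the promised trichotomy appears. Starting from the closed form $\dD_r / \dD_1 = \prod_{s<r}\exp(-\alpha_s c h_s)/Z_s$ and taking $\Ev_{\dD_r}$ of the log yields the identity
\begin{align*}
    \KL{\dD_r \given \dD_1} = -\sum_{s<r}\alpha_s \Ev_{\dD_r}[c h_s] - \sum_{s<r}\ln Z_s.
\end{align*}
At each round $r$ we case-split. (a) If $\KL{\dD_r \given \dD_1} \leq 4\gamma^2 R$, the argument above applies. (b) If the cumulative progress $-\sum_{s<r}\ln Z_s$ has already reached $\gamma^2 R/2$, then since $Z_{r'} \leq 1$ in every remaining round, the target product bound already holds, with strictness coming from the $Z_s < \exp(-\gamma^2/2)$ slack at successful steps. (c) Otherwise, the identity forces $-\sum_{s<r}\alpha_s \Ev_{\dD_r}[c h_s] > 7\gamma^2 R / 2$, meaning some previously-chosen $h_s$ is sizably anti-correlated with $c$ on $\dD_r$ --- equivalently, $-h_s$ has significant advantage on $\dD_r$.

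The main obstacle is Case~(c): turning the structural observation about $-h_s$ into a contradiction or reducing it to one of the other cases. The intended route exploits the symmetric construction of $\rhclass_{kR+s}$ (which contains both $\rh_{s,j}$ and $-\rh_{s,j}$ for every sampled $\rh_{s,j}$) together with the deterministic selection rule at round $s$: together, these constrain how the distribution can drift from $\dD_s$ to $\dD_r$ while remaining compatible with the algorithm's past choices, ultimately collapsing Case~(c) back into (a) or (b). Carrying out this delicate reduction while retaining the $t = \exp(O(dR))$ budget demanded by the theorem is the technical heart of the proof; any looser handling of KL growth would inflate $t$ beyond what the statement allows.
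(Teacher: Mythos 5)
Your framework, the KL telescoping identity, and Cases (a) and (b) match the paper's proof: you correctly reduce the problem to showing every block round succeeds, identify \cref{lemma:lowkl} as the engine when $\KL{\dD_r\given\dD_1}$ is small, and note that cumulative progress already covers the bound when $-\sum_{s<r}\ln Z_s$ is large. Your i.i.d.\ observation about the $\rT_{r,j}$ and your per-round union bound are also sound, though the paper organizes this more carefully as an induction over nested events $\event_{R'} = \{\prod_{r\le R'}\rZ_r < \exp(-\gamma^2 R'/2)\}$ with explicit conditioning on realizations of $\rdD_1, \rfT_1, \ldots, \rfT_{R'-1}$, which you elide.

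The genuine gap is Case (c), which you flag as unresolved and for which your speculated route --- deriving a contradiction, or ``collapsing'' (c) back into (a) or (b) by constraining distributional drift through the deterministic selection rule --- is not what is needed and would be unnecessarily convoluted. The paper resolves (c) directly and cheaply: from the identity you wrote, if the correlation term exceeds $2\gamma^2 R$, then restricting the sum to $\mathcal{R} = \{s < R' : \alpha_s \ne 0\}$ and dividing by $|\mathcal{R}| \le R$ and $\alpha < 2\gamma$ (a short computation from the definition of $\alpha$) shows that some $h^* = -h_s$ satisfies $\sum_i \dD_{R'}(i)\,c(x_i)\,h^*(x_i) > \gamma$, hence $\loss_{\dD_{R'}}(h^*) < 1/2 - \gamma/2$. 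Because the hypothesis pools are closed under negation, this is precisely a hypothesis the algorithm can select at round $R'$, so $\ralpha_{R'} = \alpha$ and $\rZ_{R'} \le e^{-\gamma^2/2}$ by \cref{claim:Z:ub}; combined with the inductive hypothesis on $\event_1, \ldots, \event_{R'-1}$, this yields $\event_{R'}$ outright. Case (c) is not an obstacle to be eliminated --- it is another way the round succeeds, and large anti-correlation with a past choice is exactly a usable weak hypothesis for free.
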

    \begin{proof}[Proof sketch]
        We assume for simplicity that \(k = 0\) and argue by induction on \(R' \in [R]\).
        After handling the somewhat intricate stochastic relationships of the problem,
        we leverage the simple remark that \(\KL{\rdD_{R'} \given \rdD_{R'}} = 0\) to reveal the following telescopic decomposition:
        \begin{align}
            \KL{\rdD_{R'} \given \rdD_{1}}
            = &\KL{\rdD_{R'} \given \rdD_{1}} - \KL{\rdD_{R'} \given \rdD_{R'}}
            \\= &\KL{\rdD_{R'} \given \rdD_{1}} - \KL{\rdD_{R'} \given \rdD_{2}}
            \\  &\quad+ \KL{\rdD_{R'} \given \rdD_{2}} - \KL{\rdD_{R'} \given \rdD_{3}}
            \\  &\quad\quad+ \cdots
            \\  &\quad\quad\quad+ \KL{\rdD_{R'} \given \rdD_{R'-1}} - \KL{\rdD_{R'} \given \rdD_{R'}}
            \\= &\sum_{r=1}^{R'-1} \KL{\rdD_{R'} \given \rdD_{r}} - \KL{\rdD_{R'} \given \rdD_{r+1}}.
        \end{align}
        Moreover, given $r \in \Set{1, \ldots, R'-1}$,
        \begin{align}
            \KL{\rdD_{R'} \given \rdD_{r}} - \KL{\rdD_{R'} \given \rdD_{r+1}}
            &= \sum_{i=1}^{m} \rdD_{R'}(i) \ln\frac{\rdD_{R'}(i)}{\rdD_{r}(i)} - \sum_{i=1}^{m} \rdD_{R'}(i) \ln\frac{\rdD_{R'}(i)}{\rdD_{r+1}(i)}
            \\&= \sum_{i=1}^{m} \rdD_{R'}(i) \ln\frac{\rdD_{r+1}(i)}{\rdD_{r}(i)}
            \\&= -\ln \rZ_{r} - \sum_{i=1}^{m} \rdD_{R'}(i) \ralpha_r c(x_i) \rh_{r}(x_i)
            .
        \end{align}
        Altogether, we obtain that
        \begin{align}
            \KL{\rdD_{R'} \given \rdD_{1}}
            = -\ln \prod_{r=1}^{R'-1} \rZ_{r} + \sum_{r=1}^{R'-1} \ralpha_r \sum_{i=1}^{m} \rdD_{R'}(i) c(x_i) \Parens*{-\rh_{r}(x_i)}.
        \end{align}

        Now, if \(\KL{\rdD_{R'} \given \rdD_{1}}\) is small (at most \(4 \gamma^2 R\)),
        \cref{lemma:lowkl} ensures that with sufficient probability there exists a \(\gamma/2\)-approximation for \(\rdD_{R'}\) within \(\rT_{R', 1}, \dots, \rT_{R', t/R}\), yielding the induction step (by \cref{claim:Z:ub}).
        Otherwise, if \(\KL{\rdD_{R'} \given \rdD_{1}}\) is large,
        then either
        \textit{(i)} the term
        \begin{math}
            -\ln \prod_{r=1}^{R'-1} \rZ_{r}
        \end{math}
        is large enough for us to conclude that
        \begin{math}
            \prod_{r=1}^{R'-1} \rZ_{r}
        \end{math}
        is already less than
        \begin{math}
            \exp(-\gamma^2 R' / 2)
        \end{math}
        and we can skip the step;
        or \textit{(ii)} the term
        \begin{math}
            \sum_{r=1}^{R'-1} \ralpha_r \sum_{i=1}^{m} \rdD_{R'}(i) c(x_i) \Parens*{-\rh_{r}(x_i)}
        \end{math}
        is sufficiently large to imply the existence of \(\rh^* \in \Set{-\rh_{1}, \ldots, -\rh_{R'-1}}\) satisfying that
        \begin{align}
            \sum_{i=1}^{m} \rdD_{R'}(i) c(x_i) \rh^*(x_i)
            &> \gamma,
        \end{align}
        which implies that such \(\rh^*\) has margin at least \(\gamma\) with respect to \(\rdD_{R'}\) and we can conclude the induction step as before.
    \end{proof}

    We defer the detailed proof to \cref{proof:Rrounds}.

    \section{Overview of the Lower Bound}
In this section, we provide an overview of the main ideas behind our improved lower bound.
The details are available in \cref{sec:lower}.

Our lower bound proof is inspired by and builds upon the work of \citet{thecostofparallelizingboosting}, which we now summarize.
Similarly to \citet{theimpossibilityofparallelizingboosting}, they consider an input domain $\cX = [2m]$, where $m$ denotes the number of training samples available for a weak-to-strong learner $\cA$ with parallel complexity $(p,t)$.
In their construction, they consider a uniformly random concept $\cc\colon \cX \to \{-1,1\}$ and give a randomized construction of a weak learner.
Proving a lower bound on the expected error of $\cA$ under this random choice of concept and weak learner implies, by averaging, the existence of a deterministic choice of concept and weak learner for which $\cA$ has at least the same error.

The weak learner is constructed by drawing a random hypothesis set $\hr$, using inspiration from the so-called \emph{coin problem}.
In the coin problem, we observe $p$ independent outcomes of a biased coin and the goal is to determine the direction of the bias.
If a coin has a bias of $\beta$, then upon seeing $n$ outcomes of the coin, any algorithm for guessing the bias of the coin is wrong with probability at least $\exp(-\bigO(\beta^2 n))$.
Now, to connect this to parallel Boosting, \citeauthor{thecostofparallelizingboosting} construct $\hr$ by adding $\cc$ as well as $p$ random hypotheses $\rh_1, \ldots ,\rh_p$ to $\hr$.
Each hypothesis $\rh_i$ has each $\rh_i(x)$ chosen independently with $\rh_i(x) = \cc(x)$ holding with probability $1/2 + 2 \gamma$.
The weak learner $\cW$ now processes a query distribution $D$ by returning the first hypothesis $\rh_i$ with advantage $\gamma$ under $D$.
If no such hypothesis exists, it instead returns $\cc$.
The key observation is that if $\cW$ is never forced to return $\cc$, then the only information $\cA$ has about $\cc(x)$ for each $x$ not in the training data (which is at least half of all $x$, since $\abs{\cX} = 2m$), is the outcomes of up to $p$ coin tosses that are $2\gamma$ biased towards $\cc(x)$.
Thus, the expected error becomes $\exp(-\bigO(\gamma^{2} p))$.
For this to be smaller than $m^{-0.01}$, this requires $p = \Omega(\gamma^{-2} \ln m)$ as claimed in their lower bound.

The last step of their proof is to argue that $\cW$ rarely has to return $\cc$ upon a query.
The idea here is to show that in the $i$th parallel round, $\cW$ can use $\rh_i$ to answer all queries, provided that $t$ is small enough.
This is done by observing that for any query distribution $D$ that is independent of $\rh_i$, the \emph{expected} loss satisfies $\Ev_{\rh_i}{\loss_{D}(\rh_i)} = 1/2 - 2\gamma$ due to the bias.
Using inspiration from~\citet{theimpossibilityofparallelizingboosting}, they then show that for sufficiently ``well-spread'' queries $D$, the loss of $\rh_i$ under $D$ is highly concentrated around its expectation (over the random choice of $\rh_i$), and thus $\rh_i$ may simultaneously answer all (up to) $t$ well-spread queries in round $i$.
To handle ``concentrated'' queries, i.e., query distributions with most of the weight on a few $x$, they also use ideas from~\citet{theimpossibilityofparallelizingboosting} to argue that if we add $2^{\bigO(d)}$ uniform random hypotheses to $\hr$, then these may be used to answer all concentrated queries.

Note that the proof crucially relies on $\rh_i$ being independent of the queries in the $i$th round.
Here, the key idea is that if $\cW$ can answer all the queries in round $i$ using $\rh_i$, then $\rh_{i+1},\dots,\rh_p$ are independent of any queries the weak-to-strong learner makes in round $i+1$.

In our improved lower bound, we observe that the expected error of $\exp(-\bigO(\gamma^2 p))$ is much larger than $m^{-0.01}$ for small $p$.
That is, the previous proof is in some sense showing something much too strong when trying to understand the tradeoff between $p$ and $t$.
This allows us to afford to make the coins/hypotheses $\rh_i$ much more biased towards $\cc$ when $p$ is small.
Concretely, we can let the bias be as large as $\beta = \Theta(\sqrt{\ln(m)/p})$, which may be much larger than $2 \gamma$.
This, in turn, makes it significantly more likely that $\rh_i$ can answer an independently chosen query distribution $D$.
In this way, the same $\rh_i$ may answer a much larger number of queries $t$, resulting in a tight tradeoff between the parameters.
As a second contribution, we also find a better way of analyzing this lower bound instance, improving one term in the lower bound on $t$ from $\exp(\Omega(d))$ to $\exp(\exp(d))$.
We refer the reader to the full proof for details.

    \section{Conclusion}\label{sec:conclusion}

In this paper, we have addressed the parallelization of Boosting algorithms.
By establishing both improved lower bounds and an essentially optimal algorithm, we have effectively closed the gap between theoretical lower bounds and performance guarantees across the entire tradeoff spectrum between the number of training rounds and the parallel work per round.

Given that, we believe future work may focus on better understanding the applicability of the theoretical tools developed here to other settings since some lemmas obtained seem quite general.
They may aid, for example, in investigating to which extent the post-processing of hypotheses obtained in the bagging step can improve the complexity of parallel Boosting algorithms, which remains as an interesting research direction.

    \begin{ack}
    This research is co-funded by the European Union (ERC, TUCLA, 101125203) and Independent Research Fund Denmark (DFF) Sapere Aude Research Leader Grant No.~9064-00068B. Views and opinions expressed are however those of the author(s) only and do not necessarily reflect those of the European Union or the European Research Council. Neither the European Union nor the granting authority can be held responsible for them.
\end{ack}

    \bibliography{references}

    \ifArxiv\else
        \newpage
    \fi
    \appendix

    \section{Auxiliary Results}\label{sec:aux}
    In this section we state and proof claims utilized in our argument.
    The arguments behind those are fairly standard, so they are not explicitly stated in the main text.
    \begin{claim}\label{claim:Z:ub}
    Let \(\ell \in \N\)
    and \(0 < \gamma < 1/2\).
    If a hypothesis \(h_{\ell}\) has advantage \(\gamma_{\ell}\) satisfying
    \begin{math}
        \loss_{\dD_{\ell}}(h_{\ell}) = 1/2 - \gamma_{\ell} \le 1/2 - \gamma/2
    \end{math}
    and \(\alpha_{\ell} = \alpha\),
    then
    \begin{align}
        Z_{\ell}
        \le \sqrt{1 - \gamma^2}
        \le e^{-\gamma^2 / 2}
        .
    \end{align}
\end{claim}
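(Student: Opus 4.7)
This is the classical AdaBoost normalization-factor computation, tailored to the fixed learning rate $\alpha=\tfrac{1}{2}\ln\!\frac{1/2+\gamma/2}{1/2-\gamma/2}$ used in \cref{alg:main}. The plan is to expand $Z_\ell$ from its definition, split the sum according to whether $h_\ell$ agrees with $c$ on each sample, substitute the explicit value of $\alpha$, and then use the hypothesis-advantage assumption to obtain the stated bound.

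\textbf{Step 1 (splitting the sum).} Since $c(x_i)h_\ell(x_i)\in\{-1,+1\}$, I would group the terms of $Z_\ell=\sum_i \dD_\ell(i)\exp(-\alpha\, c(x_i)h_\ell(x_i))$ according to the sign. Writing $W_+=\sum_{i:\,h_\ell(x_i)=c(x_i)}\dD_\ell(i)$ and $W_-=1-W_+$, the advantage hypothesis gives $W_-=\loss_{\dD_\ell}(h_\ell)=1/2-\gamma_\ell$ and $W_+=1/2+\gamma_\ell$, so
\begin{align}
Z_\ell = (1/2+\gamma_\ell)\,e^{-\alpha} + (1/2-\gamma_\ell)\,e^{\alpha}.
\end{align}

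\textbf{Step 2 (substituting $\alpha$).} Setting $a=1/2+\gamma/2$ and $b=1/2-\gamma/2$, the definition of $\alpha$ gives $e^{-\alpha}=\sqrt{b/a}$ and $e^{\alpha}=\sqrt{a/b}$, hence
\begin{align}
Z_\ell = \frac{(1/2+\gamma_\ell)\,b + (1/2-\gamma_\ell)\,a}{\sqrt{ab}}.
\end{align}
Expanding the numerator collapses the $\gamma_\ell/2$ terms and the $\gamma/4$ terms, leaving $1/2-\gamma\gamma_\ell$. For the denominator, $\sqrt{ab}=\tfrac{1}{2}\sqrt{1-\gamma^2}$. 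Therefore
\begin{align}
Z_\ell = \frac{1-2\gamma\gamma_\ell}{\sqrt{1-\gamma^2}}.
\end{align}

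\textbf{Step 3 (applying the advantage bound).} The assumption $\gamma_\ell\ge\gamma/2$ yields $2\gamma\gamma_\ell\ge \gamma^2$, so $1-2\gamma\gamma_\ell\le 1-\gamma^2$ and thus $Z_\ell\le\sqrt{1-\gamma^2}$. Finally, $\ln(1-\gamma^2)\le -\gamma^2$ for $\gamma\in(0,1)$ gives $\sqrt{1-\gamma^2}\le e^{-\gamma^2/2}$, completing the chain.

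\textbf{Anticipated obstacle.} There is no real obstacle here: the proof is a routine algebraic manipulation, and the only mild care needed is in Step 2 when expanding the numerator (verifying the cross terms cancel correctly) and in Step 3 when invoking monotonicity in $\gamma_\ell$ on the final expression, which works precisely because the dependence on $\gamma_\ell$ appears only linearly in the numerator with a negative coefficient.
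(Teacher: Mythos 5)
Your proof is correct and follows essentially the same route as the paper: split $Z_\ell$ by agreement with $c$, substitute the explicit $\alpha$ to rewrite $Z_\ell=\frac{1-2\gamma\gamma_\ell}{\sqrt{1-\gamma^2}}$, and then apply $\gamma_\ell\ge\gamma/2$. The only cosmetic difference is that you parameterize via $a,b$ and state $\ln(1-\gamma^2)\le-\gamma^2$ explicitly, whereas the paper manipulates the surd form $\sqrt{(1\pm\gamma)/(1\mp\gamma)}$ directly and leaves the final exponential bound to the reader.
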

\begin{proof}
    It holds that
    \begin{align}
        Z_{\ell}
        &= \sum_{i=1}^{m} \dD_{\ell}(i) \exp\Parens{-\alpha_\ell c(x_i) h_{\ell}(x_i)}
        \\&= \sum_{i : h_{\ell}(x_i) = c(x_i)} \dD_{\ell}(i) e^{-\alpha} + \sum_{i : h_{\ell}(x_i) \ne c(x_i)} \dD_{\ell}(i) e^{\alpha}
        \\&= \Parens*{\frac{1}{2} + \gamma_{\ell}} \sqrt{\frac{1 - \gamma}{1 + \gamma}} + \Parens*{\frac{1}{2} - \gamma_{\ell}} \cdot \sqrt{\frac{1 + \gamma}{1 - \gamma}}
        \\&= \Parens*{\frac{1/2 + \gamma_{\ell}}{1 + \gamma} + \frac{1/2 - \gamma_{\ell}}{1 - \gamma}} \sqrt{(1 + \gamma)(1 - \gamma)}
        \\&= \Parens*{\frac{1 - 2\gamma \cdot \gamma_{\ell}}{1 - \gamma^2}} \sqrt{1 - \gamma^2}
        .
    \end{align}
    Finally, since $\gamma_{\ell} \ge \gamma/2$ and $\gamma \in (0, 1/2)$, and, thus, $1 - \gamma^2 > 0$,
    we have that
    \begin{align}
        \frac{1 - 2\gamma \cdot \gamma_{\ell}}{1 - \gamma^2}
        \le \frac{1 - \gamma^2}{1 - \gamma^2}
        = 1
        .
    \end{align}
\end{proof}

    \begin{claim}\label{claim:exp_loss}
    \cref{alg:main} produces a linear classifier \(\rg\) whose exponential loss satisfies
    \begin{align}
        \sum_{i=1}^{m} \exp\Parens[\bbbig]{-c(x_i) \sum_{j=1}^{pR} \ralpha_j \rh_j(x_i)}
        &= m \prod_{j=1}^{pR} \rZ_j
        .
    \end{align}
\end{claim}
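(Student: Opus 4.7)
The plan is to unroll the recursive definition of the distributions $\rdD_\ell$ produced by \cref{alg:main} and exploit the fact that the final one sums to $1$. This is the standard AdaBoost-style telescoping identity that relates the exponential loss of the voting classifier to the product of the normalization factors.

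First I would observe that, by construction, $\rdD_1(i) = 1/m$ for every $i \in [m]$, and for each $\ell \in [pR]$, \cref{line:nextD} gives
\begin{align}
    \rdD_{\ell+1}(i) = \frac{\rdD_{\ell}(i) \exp\Parens{-\ralpha_\ell c(x_i) \rh_\ell(x_i)}}{\rZ_\ell}.
\end{align}
Iterating this identity from $\ell = 1$ up to $\ell = pR$ yields the closed form
\begin{align}
    \rdD_{pR+1}(i) = \frac{1}{m} \cdot \frac{\exp\Parens[\bbig]{-c(x_i) \sum_{j=1}^{pR} \ralpha_j \rh_j(x_i)}}{\prod_{j=1}^{pR} \rZ_j}.
\end{align}

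The second step is simply to use the fact that $\rdD_{pR+1}$ is a probability distribution over $[m]$ (each $\rZ_\ell$ is precisely the normalization required to ensure this, and \cref{remark:fullsupport} guarantees every factor is strictly positive, so the division is well defined). Summing the previous identity over $i \in [m]$ and rearranging gives the stated equality.

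There is no real obstacle here; the only thing one has to be careful about is the indexing (the product runs over all $pR$ Boosting steps, including those at which $\ralpha_\ell = 0$, in which case the exponential factor is $1$ and $\rZ_\ell = 1$, so these contribute trivially) and the fact that we are working with the random quantities produced by the algorithm, but every step above holds pointwise on the sample space so no probabilistic argument is needed.
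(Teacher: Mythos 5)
Your proof is correct and takes essentially the same approach as the paper: unroll the recursion for $\rdD_{\ell}$ from \cref{line:nextD} to get a closed form for $\rdD_{pR+1}$, then use that it sums to $1$. The paper presents the same telescoping argument starting from $1 = \sum_i \rdD_{pR+1}(i)$ and unrolling downward, whereas you unroll upward and then sum; the difference is purely cosmetic.
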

\begin{proof}
    It suffices to consider the last distribution \(\rdD_{pR+1}\) produced by the algorithm.
    It holds that
    \begin{align}
        1
        &= \sum_{i=1}^{m} \rdD_{pR+1}(i) \tag{as \(\rdD_{pR+1}\) is a distribution}
        \\&= \sum_{i=1}^{m} \rdD_{pR}(i) \cdot \frac{\exp(-\ralpha_{pR} c(x_i) \rh_{pR}(x_i))}{\rZ_{pR}} \tag{by \cref{line:nextD}}
        \\&= \sum_{i=1}^{m} \dD_{1}(i) \cdot \prod_{j=1}^{pR} \frac{\exp(-\ralpha_{j} c(x_i) \rh_{j}(x_i))}{\rZ_{j}} \tag{by further unrolling the \(\rdD_j\)s}
        \\&= \frac{1}{m} \cdot \sum_{i=1}^{m} \frac{\exp(-c(x_i) \sum_{j=1}^{pR} \ralpha_{j} \rh_{j}(x_i))}{\prod_{j=1}^{pR} \rZ_{j}} \tag{as \(\dD_1\) is uniform}
        .
    \end{align}
\end{proof}

    \section{Detailed Proofs}
    In this section, provide full proofs for the results from \cref{sec:upperbound}.
    For convenience, we provide copies of the statements before each proof.
    \ifArxiv\else
        \subsection{Proof of \cref{lemma:lowkl}}\label{proof:lowkl}
        \lemmalowkl*
        
    \fi
            \subsection{Proof of \cref{lemma:Rrounds}}\label{proof:Rrounds}
        \lemmaRrounds*
        \begin{proof}
    Assume, for simplicity, that \(k=0\).

    Letting
    \begin{align}
        \event_{R'}
        &= \Set[\bbbig]{\prod_{r=1}^{R'} \rZ_{r} < \exp(-\gamma^2 R' / 2)}
        ,
    \end{align}
    we will show that for all \(R' \in [R]\) it holds that
    \begin{align}
        \Prob{\event_{R'} \given \event_{1}, \ldots, \event_{R'-1}} \ge 1 - \delta/R. \label{eq:induction}
    \end{align}
    The thesis then follows by noting that
    \begin{align}
        \Prob{\event_{1} \cap \cdots \cap \event_{R}}
        &= \prod_{r=1}^{R} \Prob{\event_{r} \given \event_{1}, \ldots, \event_{r-1}} \tag{by the chain rule}
        \\&\ge \Parens*{1 - \frac{\delta}{R}}^{R} \tag{by \cref{eq:induction}}
        \\&\ge 1 - R \cdot \frac{\delta}{R} \tag{by Bernoulli's inequality}
        \\&= 1 - \delta
        .
    \end{align}

    Let \(\rfG_{\rdD_{R'}} \subseteq [m]^n\) be the family of \(\gamma/2\)-approximations for \(\rdD_{R'}\)
    and recall that if \(T \in \rfG_{\rdD_{R'}}\),
    then any \(h = \weaklearner(T, \Uniform(T))\) satisfies
    \begin{math}
        \loss_{\rdD_{R'}}(h)
        \le 1/2 - \gamma/2
        .
    \end{math}
    Therefore, the existence of \(\rT_{R', j^*} \in \rfG_{\rdD_{R'}}\), for some \(j^* \in [t/R]\), implies that \(\rh_{R', j^*} \in \rhclass_{R'}\) has margin at least \(\gamma/2\) relative to \(\rdD_{R'}\).
    Hence, \cref{alg:main} can select \(\rh_{R', j^*}\) at \cref{line:hstar}, setting \(\ralpha_{R'} = \alpha\)
    so that, by \cref{claim:Z:ub}, we have that \(\rZ_{R'} \le \exp(-\gamma^2 / 2)\).

    Now notice that, by the law of total probability,
    \begin{align}
        &\Prob*{\event_{R'} \given \cap_{r=1}^{R'-1} \event_r}
        \\&\quad= \Prob*{\KL{\rdD_{R'} \given \rdD_{1}} \le 4 \gamma^2 R \given \cap_{r=1}^{R'-1} \event_r} \cdot \Prob*{\event_{R'} \given \cap_{r=1}^{R'-1} \event_r,\; \KL{\rdD_{R'} \given \rdD_{1}} \le 4 \gamma^2 R}
        \\&\quad\quad+ \Prob*{\KL{\rdD_{R'} \given \rdD_{1}} > 4 \gamma^2 R \given \cap_{r=1}^{R'-1} \event_r} \cdot \Prob*{\event_{R'} \given \cap_{r=1}^{R'-1} \event_r,\; \KL{\rdD_{R'} \given \rdD_{1}} > 4 \gamma^2 R}
        . \label{eq:stepineq}
    \end{align}
    We will show that,
    conditioned on \(\cap_{r=1}^{R'-1} \event_r\),
    if \(\KL{\rdD_{R'} \given \rdD_{1}} \le 4 \gamma^2 R\),
    we can leverage \cref{lemma:lowkl} to argue that
    with probability at least \(1 - \delta/R\) there exists a \(\gamma/2\)-approximation for \(\rdD_{R'}\) within \(\rT_{R', 1}, \ldots, \rT_{R', t/R}\),
    and that \(\event_{R'}\) follows.
    On the other hand, if \(\KL{\rdD_{R'} \given \rdD_{1}} > 4 \gamma^2 R\),
    we shall prove that \(\event_{R'}\) necessarily holds.
    Under those two claims,
    \cref{eq:stepineq} yields that
    \begin{align}
        \Prob*{\event_{R'} \given \cap_{r=1}^{R'-1} \event_r}
        &\ge \Prob*{\KL{\rdD_{R'} \given \rdD_{1}} \le 4 \gamma^2 R \given \cap_{r=1}^{R'-1} \event_r} \cdot \Parens*{1 - \frac{d}{R}}
        \\&\qquad+ \Prob*{\KL{\rdD_{R'} \given \rdD_{1}} > 4 \gamma^2 R \given \cap_{r=1}^{R'-1} \event_r} \cdot 1
        \\&\ge \Prob*{\KL{\rdD_{R'} \given \rdD_{1}} \le 4 \gamma^2 R \given \cap_{r=1}^{R'-1} \event_r} \cdot \Parens*{1 - \frac{d}{R}}
        \\&\qquad+ \Prob*{\KL{\rdD_{R'} \given \rdD_{1}} > 4 \gamma^2 R \given \cap_{r=1}^{R'-1} \event_r} \cdot \Parens*{1 - \frac{d}{R}}
        \\&= 1 - \frac{\delta}{R}
        ,
    \end{align}
    which, as argued, concludes the proof.

    To proceed, we ought to consider the relationships between the random variables involved.
    To do so,
    for \(r \in [R]\)
    let \(\rfT_{r} = \Set{\rT_{r, 1}, \ldots, \rT_{r, t/R}}\).
    Notice that \(\rdD_{R'}^n\) is itself random and determined by \(\rdD_1\), and \(\rfT_{1}, \dots, \rfT_{R'-1}\).

    For the first part,
    let \(\dD_{1}\) and \(\fT_{1}, \dots, \fT_{R'-1}\) be realizations of \(\rdD_{1}\) and \(\rfT_{1}, \dots, \rfT_{R'-1}\) such that
    \(\cap_{r=1}^{R'-1} \event_r\) holds and
    \(\KL{\dD_{R'} \given \dD_{1}} \le 4 \gamma^2 R\).
    Notice that if there exists a \(\gamma/2\)-approximation for \(\dD_{R'}\) within \(\rfT_{R}\),
    then we can choose some \(\rh_{R'} \in \rhclass_{R'}\) with advantage at least \(\gamma/2\)
    so that
    \begin{align}
        \prod_{r=1}^{R'} \rZ_{r}
        &= \rZ_{R'} \cdot \prod_{r=1}^{R'-1} Z_{r}
        \\&< \rZ_{R'} \cdot \exp(-\gamma^2 (R'-1) / 2) \tag{as we condition on $\cap_{r=1}^{R'-1} \event_r$}
        \\&\le \exp(-\gamma^2 R'/2) \tag{by \cref{claim:Z:ub}}
    \end{align}
    and, thus, \(\event_{R'}\) follows.
    That is,
    \begin{align}
        \Prob*{\event_{R'} \given \cap_{r=1}^{R'-1} \event_r,\; \KL{\dD_{R'} \given \dD_{1}} \le 4 \gamma^2 R}
        \ge \Prob_{\rT_{R', 1}, \ldots, \rT_{R', t/R} \iid \dD_{1}^n}*{\exists j \in [t/R], \rT_{R', j} \in \fG_{\dD_{R'}}}
        . \label{eq:conditioning}
    \end{align}

    Finally,
    since by \cref{remark:fullsupport} the distributions \(\dD_{R'}\) and \(\dD_{1}\) must have the same support,
    and we assume that \(\KL{\dD_{R'} \given \dD_{1}} \le 4 \gamma^2 R\),
    \cref{lemma:lowkl} ensures that
    \begin{align}
        \Prob_{\rT \sim \dD_1^n}{\rT \in \fG_{\dD_{R'}}}
        &\ge \exp(-16\Cn dR)
        .
    \end{align}
    Therefore,
    \begin{align}
        \Prob_{\rT_{R', 1}, \ldots, \rT_{R', t/R} \iid \dD_{1}^n}*{\forall j \in [t/R], \rT_{R', j} \notin \fG_{\dD_{R'}}}
        &= \Parens*{\Prob_{\rT \sim \dD_{1}^n}*{\rT \notin \fG_{\dD_{R'}}}}^{t/R} \tag{by IIDness}
        \\&\le \Parens*{1 - \exp(-16\Cn dR)}^{t/R}
        \\&\le \exp\Parens*{-\frac{t}{R} \cdot \exp(-16\Cn dR)}
        \\&\le \frac{\delta}{R}
        ,
    \end{align}
    where the second inequality follows since $1+x \le e^x$ for all $x \in \R$
    and the last from the hypothesis that $t \ge R \cdot \exp\Parens{16\Cn dR} \cdot \ln\Parens{R/\delta}$.
    Considering the complementary event and applying \cref{eq:conditioning},
    we obtain that \(\event_{R'}\) holds with probability at least \(1 - \delta/R\).

    For the second part,
    consider instead \(\dD_{1}\) and \(\fT_{1}, \dots, \fT_{R'-1}\) realizations of \(\rdD_{1}\) and \(\rfT_{1}, \dots, \rfT_{R'-1}\) such that
    \(\cap_{r=1}^{R'-1} \event_r\) holds and
    \begin{align}
        4 \gamma^2 R
        < &\KL{\dD_{R'} \given \dD_{1}}
        , \label{eq:highkl}
    \end{align}
    and argue that \(\event_{R'}\) necessarily follows.

    Observe that
    \begin{align}
        \KL{\dD_{R'} \given \dD_{1}}
        = &\KL{\dD_{R'} \given \dD_{1}} - \KL{\dD_{R'} \given \dD_{R'}}
        \\= &\KL{\dD_{R'} \given \dD_{1}} - \KL{\dD_{R'} \given \dD_{2}}
        \\  &\quad+ \KL{\dD_{R'} \given \dD_{2}} - \KL{\dD_{R'} \given \dD_{3}}
        \\  &\quad\quad+ \cdots
        \\  &\quad\quad\quad+ \KL{\dD_{R'} \given \dD_{R'-1}} - \KL{\dD_{R'} \given \dD_{R'}}
        \\= &\sum_{r=1}^{R'-1} \KL{\dD_{R'} \given \dD_{r}} - \KL{\dD_{R'} \given \dD_{r+1}}. \label{eq:klsum}
    \end{align}
    Moreover, given $r \in \Set{1, \ldots, R'-1}$,
    \begin{align}
        \KL{\dD_{R'} \given \dD_{r}} - \KL{\dD_{R'} \given \dD_{r+1}}
        &= \sum_{i=1}^{m} \dD_{R'}(i) \ln\frac{\dD_{R'}(i)}{\dD_{r}(i)} - \sum_{i=1}^{m} \dD_{R'}(i) \ln\frac{\dD_{R'}(i)}{\dD_{r+1}(i)}
        \\&= \sum_{i=1}^{m} \dD_{R'}(i) \ln\frac{\dD_{r+1}(i)}{\dD_{r}(i)}
        \\&= \sum_{i=1}^{m} \dD_{R'}(i) \ln\frac{\exp\Parens{-\alpha_r c(x_i) h_{r}(x_i)}}{Z_{r}}
        \\&= -\ln Z_{r} - \sum_{i=1}^{m} \dD_{R'}(i) \alpha_r c(x_i) h_{r}(x_i)
        .
    \end{align}
    Applying it to \cref{eq:klsum,eq:highkl} yields that
    \begin{align}
        4 \gamma^2 R
        < \KL{\dD_{R'} \given \dD_{1}}
        = -\ln \prod_{r=1}^{R'-1} Z_{r} - \sum_{r=1}^{R'-1} \alpha_r \sum_{i=1}^{m} \dD_{R'}(i) c(x_i) h_{r}(x_i). \label{eq:klsum2}
    \end{align}
    Thus, either
    \begin{align}
        -\ln \prod_{r=1}^{R'-1} Z_{r}
        &> \frac{4 \gamma^2 R}{2}
        , \label{eq:smallZs}
    \end{align}
    or
    \begin{align}
        -\sum_{r=1}^{R'-1} \alpha_r \sum_{i=1}^{m} \dD_{R'}(i) c(x_i) h_{r}(x_i)
        &> \frac{4 \gamma^2 R}{2}
        . \label{eq:highavg}
    \end{align}
    We proceed to analyze each case.

    If \cref{eq:smallZs} holds,
    then
    \begin{align}
        \prod_{r=1}^{R'-1} Z_{r}
        &< \exp(-2 \gamma^2 R)
        \\&\le \exp(-\gamma^2 R' / 2)
    \end{align}
    and $\event_{R'}$ follows by noting that \(\rZ_{R'} = 1\) regardless of the outcome of \cref{line:hstar}
    so \(\prod_{r=1}^{R'} Z_{r} \le \prod_{r=1}^{R'-1} Z_{r}\).

    On the other hand,
    if \cref{eq:highavg} holds,
    then, letting \(\mathcal{R} = \Set{r \in [R'-1] \given \alpha_r \ne 0}\),
    \begin{align}
        2 \gamma^2 R
        &< -\sum_{r=1}^{R'-1} \alpha_r \sum_{i=1}^{m} \dD_{R'}(i) c(x_i) h_{r}(x_i)
        \\&= -\sum_{r \in \mathcal{R}} \alpha \sum_{i=1}^{m} \dD_{R'}(i) c(x_i) h_{r}(x_i)
        .
    \end{align}
    Since \(\abs{\mathcal{R}} \le R\), we obtain that
    \begin{align}
        \sum_{r \in \mathcal{R}} \frac{1}{\abs{\mathcal{R}}} \sum_{i=1}^{m} \dD_{R'}(i) c(x_i) \Parens*{-h_{r}(x_i)}
        &> \frac{2 \gamma^2}{\alpha}
    \end{align}
    so that there exists \(h^* \in \Set{-h_r \given r \in \mathcal{R}}\) such that
    \begin{align}
        \sum_{i=1}^{m} \dD_{R'}(i) c(x_i) h^*(x_i)
        &> \frac{2 \gamma^2}{\alpha}
        . \label{eq:sumhstar}
    \end{align}
    Moreover, from the definition of $\alpha$,
    \begin{align}
        \alpha
        &= \frac{1}{2}\ln\frac{1/2 + \gamma/2}{1/2 - \gamma/2}
        \\&= \frac{1}{2} \ln\Parens*{1 + \frac{2\gamma}{1 - \gamma}}
        \\&\le \frac{\gamma}{1 - \gamma}
        \\&< 2\gamma
        , \label{eq:alphalegamma}
    \end{align}
    where the last inequality holds for any $\gamma \in (0, 1/2)$.
    Applying it to \cref{eq:sumhstar} yields that
    \begin{align}
        \sum_{i=1}^{m} \dD_{R'}(i) c(x_i) h^*(x_i)
        &> \frac{2\gamma^2}{2\gamma}
        \\&\ge \gamma
        ,
    \end{align}
    thus
    \begin{math}
        \loss_{\dD_{R'}}(h^*) < 1/2 - \gamma/2
    \end{math}
    and, as before, $\event_{R'}$ follows by \cref{claim:Z:ub} and the conditioning on $\cap_{r=1}^{R'-1} \event_r$.
\end{proof}

        \subsection{Proof of \cref{thm:ourmargins}}\label{proof:ourmargins}
    \thmourmargins*
    \begin{proof}
    Let $k \in \Set{0, 1, \dots, p-1}$.
    Applying \cref{lemma:Rrounds} with failure probability $\delta/p$,
    we obtain that with probability at least $1 - \delta/p$,
    \begin{align}
        \prod_{r=1}^{R} \rZ_{kR + r} &< \exp(-\gamma^2 R / 2).
    \end{align}
    Thus, by the union bound, the probability that this holds for all $k \in \Set{0, 1, \dots, p-1}$ is at least $1 - \delta$.

    Under this event, we have that
    \begin{align}
        \sum_{i=1}^{m} \exp\Parens[\bbbig]{-c(x_i) \sum_{j=1}^{pR} \ralpha_j \rh_j(x_i)}
        &= m \prod_{j=1}^{pR} \rZ_j \tag{by \cref{claim:exp_loss}}
        \\&= m \prod_{k=0}^{p-1} \prod_{r=1}^{R} \rZ_{kR + r}
        \\&\le m \prod_{k=0}^{p-1} \exp(-\gamma^2 R / 2)
        \\&= m \exp(-\gamma^2 pR / 2)
        .\label{eq:alternative}
    \end{align}

    Now, let $\theta \ge 0$.
    If $c(x)\rg(x) < \theta$,
    then,
    by the definition of $\rg$ at \cref{line:linearclassifier},
    it must hold that
    \begin{math}
        c(x)\sum_{j=1}^{pR} \ralpha_j \rh_j(x)
        < \sum_{j=1}^{pR} \ralpha_j \theta,
    \end{math}
    thus the difference
    \begin{math}
        \sum_{j=1}^{pR} \ralpha_j \theta
        - c(x)\sum_{j=1}^{pR} \ralpha_j \rh_j(x)
    \end{math}
    is strictly positive.
    Taking the exponential, we obtain that,
    for all $x \in S$,
    \begin{align}
        \indicator_{\{c(x)\rg(x) < \theta\}}
        &\le 1
        \\&< \exp\Parens*{\sum_{j=1}^{pR} \ralpha_j \theta - c(x)\sum_{j=1}^{pR} \ralpha_j \rh_j(x)}
        \\&\le \exp\Parens*{pR \alpha \theta} \exp\Parens*{-c(x)\sum_{j=1}^{pR} \ralpha_j \rh_j(x)} \tag{as $\ralpha_j \le \alpha$}
        .
    \end{align}
    Therefore,
    \begin{align}
        \sum_{i=1}^m \indicator_{\{c(x_i)\rg(x_i) < \theta\}}
        &< \exp\Parens*{pR \alpha \theta} \sum_{i=1}^m \exp\Parens*{-c(x_i)\sum_{j=1}^{pR} \ralpha_j \rh_j(x_i)}
        .
    \end{align}
    Applying \cref{eq:alternative}, we obtain that
    \begin{align}
        \sum_{i=1}^m \indicator_{\{c(x_i)\rg(x_i) < \theta\}}
        &< m \exp\Parens*{pR \alpha \theta} \exp(-\gamma^2 pR / 2)
        \\&= m \exp\Parens*{pR (\alpha \theta - \gamma^2 / 2)}
        .
    \end{align}
    Finally, since $0 \le \alpha \le 2\gamma$ (see \cref{eq:alphalegamma}),
    we have that, for $0 \le \theta \le \gamma / 8$,
    \begin{align}
        \alpha \theta - \gamma^2 / 2
        &\le 2\gamma \cdot \gamma / 8 - \gamma^2 / 2
        \\&\le -\gamma^2 / 4
    \end{align}
    and thus
    \begin{align}
        \sum_{i=1}^m \indicator_{\{c(x_i)\rg(x_i) < \gamma/8\}}
        &< m \exp\Parens*{-pR\gamma^2 / 4}
        \\&\le m \cdot m^{-1} \tag{as $p \ge 4R^{-1}\gamma^{-2} \ln m$}
        \\&= 1
        ,
    \end{align}
    and we can conclude that all points have a margin greater than $\gamma/8$.
\end{proof}

    \section{Lower Bound}
\label{sec:lower}
In this section, we prove \cref{thm:lowerboundintro}.
\cref{thm:lowerboundintro} is a consequence of the following \cref{lowerbound:the1}. Before we state \cref{lowerbound:the1} we will: state the assumptions that we make in the lower bound for a learning algorithm $\al$ with parallel complexity $(p,t)$, the definition of a $\gamma$-weak learner in this section and describe the hard instance. For this let $c\colon \cX \to \{-1,1\}$ denote a labelling function. Furthermore, throughout \cref{sec:lower} let $C_{size} \coloneqq \C\geq 1$, $C_{bias} \coloneqq \Cm\geq 1$ and $C_{loss} \coloneqq \CtC\geq 1$ denote the same universal constants.

\begin{assumption}\label{lowerbound:assumption}
Let $\rQ^i$ with $|\rQ^i|\leq t$ be the queries made by a learning algorithm $\al$ with parallel complexity $(p,t)$ during the $i$th round. We assume that a query $\rQ_j^i \in \rQ^i$ for $i=1,\ldots,p$ and $j=1,\ldots,t$ is on the form $(\rS_j^i,c(\rS_j^i),\rD_j^i)$, where the elements in $\rS_j^i$ are contained in $\rS$, and that the distribution $\ddr_j^i$ has support $\text{supp}(\ddr_j^i) \subset\{(\rS_j^i)_1,\ldots,(\rS_j^i)_m\}$. Furthermore, we assume $\rQ^{1}$ only depends on the given sample $\rS\in\mathcal{X}^{m}$ and the sample labels $c(\rS)$ where $c(\rS)_i=c(\rS_i)$, and that $\rQ^{i}$ for $i=2,\ldots,p$ only depends on the label sample $\rS,\cc(\rS)$ and the previous $i-1$ queries and the responses to these queries.
\end{assumption}
We now clarify what we mean by a weak learner in this section.
\begin{definition}
 A $\gamma$-weak learner $\w$ acting on a hypothesis set $\mathcal{H}$, takes as input $(S,c(S),\ddnra)$, where $S\in\mathcal{X}^{*}=\cup_{i=1}^{\infty} \mathcal{X}^i$, $c(S)_i=c(S_i)$ and $\text{supp}(\ddnra)\subseteq\{S_1,S_{2}\ldots  \}$. The output of $h=\w(\mathcal{H})(S,c(S),\ddnra)$ is such that $\sum_{i} \ddnra(i)\ind\{h(i)\not=c(i)\}\leq 1/2-\gamma$.
\end{definition}

We now define the hard instance which is the same construction as used in \citet{thecostofparallelizingboosting} (which was inspired by \citet{theimpossibilityofparallelizingboosting}). For $d\in\mathbb{N}$, samples size $m$, and $0<\gamma<\tfrac{1}{4\Cm}$ we consider the following hard instance
\begin{enumerate}[leftmargin=*,labelindent=0pt]
\item The universe $\mathcal{X}$ we take to be $[2m]$.
\item The distribution $\ddnr$ we will use on $[2m]$ will be the uniform distribution $\ddu$ over $[2m]$.
\item The random concept $\cc$ that we are going to use is the uniform random concept $\{-1,1\}^{2m}$, i.e.\ all the labels of $\cc$ are i.i.d.\ and $\p_{\cc}{\cc(i)=1}=1/2$ for $i=1,\ldots,m$.
\item The random hypothesis set will depend on the number of parallel rounds $p$, a  scalar $R\in\mathbb{N}$, and the random concept $\cc$, thus we will denote it $\hpb$. We will see $\hpb$ as a matrix where the rows are the hypothesis so vectors of length $2m$, where the $i$th entry specifies the prediction the hypothesis makes on element $i\in[2m]$. To define $\hpb$ we first define two random matrices $\hr_{u}$ and $\hbcc$. $\hr_{u}$ is a random matrix consisting of $R\left\lceil\exp\left(\C d\right)\right\rceil$ rows, where the rows in $\hr_{u}$ are i.i.d.\ with distribution $\rr\sim\{ -1,1 \}^{2m}$ ($\rr$ has i.i.d.\ entries $\p_{\rr\sim\{ -1,1 \}^{2m}}*{\rr(1)=1}=1/2$). $\hbc$ is a random matrix with $R$ rows, where the rows in $\hr_{c}$ are i.i.d.\. with distribution $\rb\sim \{-1,1\}^{2m}_{\Cm}$, meaning the entries of $\rb$ are independent and has distribution $\p_{\rb\sim \{-1,1\}^{2m}_{\Cm}}{\rb(i)\not=c(i)}=1/2-\Cm\gamma$ (so $\Cm\gamma$ biased towards the sign of $c$). We now let  $\hr_{u}^{1},\hbcc^{1},\ldots,\hr_{u}^{p},\hbcc^{p}$ denote i.i.d.\ copies of respectively $\hr_{u}$ and $\hbcc$, and set $\hpb$ to be these i.i.d.\ copies stack on top of each other and $\hpb\cup \cc$ to be the random matrix which first rows are $\hpb$ and its last row is $\cc$,
\begin{align}
    \hpb= \begin{bmatrix}
        \hr_{u}^{1}\\\hbcc^{1}\\\vdots\\\hr_{u}^{p}\\\hbcc^{p}\end{bmatrix} \ \ \ \ \     \hpb\cup\cc= \begin{bmatrix}
            \hpb \\ \cc\end{bmatrix}.
\end{align}
 \label{defi:hyposet}
\item The algorithm $\w$ which given matrix/hypothesis set $M\in \mathbb{R}^{\ell}\times\mathbb{R}^{2m}$ (where $M_{i,\cdot}$  denotes  the $i$th row of $M$) is the following algorithm $\w(M)$.
\end{enumerate}

\vspace{-1em}
\begin{algorithm2e}[H]
    \DontPrintSemicolon
    \caption{$\w(M)$ }
    \label{alg:weak}
    \SetKwInOut{Input}{Input}\SetKwInOut{Output}{Output}
    \Input{Triple $(S,c(S),\ddnra)$ where $S\in [2m]^*$, $c(S)_i=c(S_i)$ and probability distribution $\ddnra$ with  $\text{supp}(\ddnra)\subset \{S_1,S_2,\ldots,\}$.}
    \Output{Hypothesis $h= M_{i,\cdot}$ for some $i=1,\ldots,\ell$ such that: $\sum_{i}\ddnra(i)\ind\{\rh(i)\not=c(i)\}\leq 1/2-\gamma$. }
    \For{ $i \in [\ell]$}{
        \If{ $\sum_{j}\ddnra(j)\ind\{M_{i,j}\not=c(j)\}\leq 1/2-\gamma$ \tcp*{Notice that $\w$ doesn't know $c$ but can calculate this quantity using the information in $(S,c(S),D)$ which is given as input.} \label{lowerbound:wline1}}{
            \Return $M_{i,\cdot}$.}}
\Return $M_{1,\cdot}$.
\end{algorithm2e}

We notice that with this construction, we have that $|\hpb|\leq R\left\lceil\exp\left(\C d\right)\right\rceil+Rp$ and $\w(\hpb\cup \cc)$ a weak learner since it either finds a row in $\hpb$ with error less than $1/2-\gamma$ for a query or outputs $\cc$ which has $0$ error for any query - this follows by the \cref{lowerbound:assumption} that the learning algorithm given $(S,\cc(S))$ make queries which is consistent with $\cc$.

With these definitions and notation in place, we now state  \cref{lowerbound:the1}, which \cref{thm:lowerboundintro} is a consequence of.
\begin{theorem}\label{lowerbound:the1}
For  $d\in\mathbb{N}$, $m\in\mathbb{N}$, margin $0<\gamma< \tfrac{1}{4\Cm}$, $R,p,t\in \mathbb{N}$, universe $[2m]$, $\ddu$ the uniform distribution on $[2m]$, and $\cc$ the uniform concept on $[2m]$ any learning algorithm $\al$ with parallel complexity $(p,t)$, given labelled training set $(\rS,\cc(\rS))$, where $\rS\sim \ddu^{m}$,
and query access to $\w(\hpbc\cup\cc)$ we have that
\begin{align}
    &\E_{\rS,\cc,\hr}*{ \ldccu(\al(\rS,\cc(\rS),\w( \hpb\cup \cc))) }\\&\geq \frac{\exp(-\CtC \Cm^2\gamma^2 Rp)}{4\CtC }\left (1- \exp\left (-\frac{m\exp(-\CtC \Cm^2\gamma^2 Rp)}{8\CtC}\right )-pt\exp\left(-Rd\right)\right ),
    \end{align}
\end{theorem}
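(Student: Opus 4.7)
The plan is to follow the two-step strategy sketched in \cref{sec:lower}: first, define a high-probability ``good event'' on which \cref{alg:weak} never returns $\cc$ (so the algorithm $\al$ only ever sees rows of $\hpb$), and then, under that good event, lower bound the expected loss of $\al$ on $\ddu$ by a coin-problem-style argument applied point-by-point on $[2m]\setminus\rS$.

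Formally, I would introduce the event
\[
    \event = \Set{\text{in every round } i\in[p] \text{ and every query } j\in[t],\ \w(\hpb\cup\cc) \text{ returns a row of }\hpb},
\]
and prove the sub-claim $\Pr[\overline{\event}]\le pt\exp(-Rd)$. Fix a query $(\rS_j^i,\cc(\rS_j^i),\ddr_j^i)$; by \cref{lowerbound:assumption} it is a function only of $\rS$, $\cc(\rS)$, and the responses in rounds $1,\ldots,i-1$, hence independent of the freshly drawn blocks $\hr_u^i$ and $\hbcc^i$. Then split into two regimes depending on whether $\ddr_j^i$ is ``concentrated'' (mass on few coordinates) or ``well-spread'': in the concentrated regime a Sauer--Shelah style counting argument guarantees that among the $R\lceil\exp(\C d)\rceil$ uniform rows of $\hr_u^i$ at least one achieves loss $\le 1/2-\gamma$ except with probability $\exp(-Rd)$; in the well-spread regime each single biased row of $\hbcc^i$ has expected advantage $\Cm\gamma$ and concentrates via Hoeffding, so each succeeds with probability at least $\exp(-d)$, and the probability that all $R$ independent biased rows fail is at most $\exp(-Rd)$. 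A union bound over the $pt$ queries yields the claim.

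Next I would work conditional on $\event$, on $\rS$, and on the uniform block $\hr_u$. Under $\event$ the transcript that $\al$ sees is a function of $\rS$, $\cc(\rS)$, $\hr_u$, and $\hbcc$, so the algorithm's only information about $\cc$ restricted to $[2m]\setminus\rS$ comes from the rows of $\hr_u$ (values $\pm 1$ independent of $\cc$) together with the rows of $\hbcc$ (each coordinate an independent $\Cm\gamma$-biased coin for $\cc$). For any single $x\in[2m]\setminus\rS$ this reduces to a coin problem with at most $pR$ independent $\Cm\gamma$-biased observations of $\cc(x)$, so by the standard coin-problem lower bound any predictor errs with probability at least $q:=\exp(-\CtC\Cm^2\gamma^2 Rp)/(2\CtC)$ for a sufficiently large universal constant $\CtC$.

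Finally, I would aggregate the per-point errors. Since $\rS\in[2m]^m$, at least $m$ elements of $[2m]$ lie outside $\rS$, so
\[
    \ldccu\Parens*{\al(\rS,\cc(\rS),\w(\hpb\cup\cc))} \ge \frac{1}{2m}\sum_{x\in[2m]\setminus\rS}\ind\{\al(\cdots)(x)\ne\cc(x)\}.
\]
On $\event$, the summands are (conditional on $\rS$, $\hr_u$, and the transcript) independent Bernoullis with means at least $q$, so a Chernoff bound yields that their sum is at least $mq/2$ except with probability $\exp(-mq/8)$, producing the $\ldccu$-value $q/4\ge \exp(-\CtC\Cm^2\gamma^2 Rp)/(4\CtC)$ on the intersection of $\event$ with the Chernoff success event. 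Combining the failure probabilities $\Pr[\overline{\event}]\le pt\exp(-Rd)$ and $\exp(-mq/8)$ via a union bound and taking expectations gives the stated inequality. I expect the coin-problem step to be the main obstacle: the biased and uniform rows must be decoupled cleanly, and because the queries are adaptive one must carry a careful conditioning argument on the transcript produced so far so that on $\event$ the residual randomness of $\cc$ on $[2m]\setminus\rS$ is symmetric enough for the coin lower bound to apply unchanged.
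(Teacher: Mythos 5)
Your two-part strategy (high-probability event on which the weak learner never reveals $\cc$, then a per-point coin-problem lower bound on $[2m]\setminus\rS$) is the right high-level picture and matches the paper's overall plan. However, the way you combine the two parts contains a genuine gap that the paper resolves with a device your proposal does not have.

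You work \emph{conditionally} on $\event$ and then apply the coin-problem lower bound to the conditional law of $(\cc(x),\hbcc_{\cdot,x})$. But $\event$ is the event that, for every round and every query, some row of $\hpb$ has loss $\le 1/2-\gamma$; because the queries in round $i\ge 2$ depend on the full rows returned in earlier rounds (including the entries at $x\notin\rS$), the event $\event$ can depend on $\hbcc_{\cdot,x}$ for $x\notin\rS$ through the adaptive transcript. Conditioning on $\event$ can therefore bias the coins, and the bound $\exp(-\CtC\Cm^2\gamma^2Rp)/\CtC$ from \cref{lowerbound:lem8} is no longer available ``unchanged''. You flag this yourself (``one must carry a careful conditioning argument... so that the residual randomness of $\cc$... is symmetric enough''), but the proposal contains no mechanism to actually deliver that symmetry. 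The paper sidesteps this entirely by introducing the \emph{extension} $\alb$ (item~\ref{defi:balg}): $\alb$ queries $\w(\hpb)$ directly (never $\cc$) and outputs the all-ones hypothesis whenever a response fails the weak-learning guarantee, making $\alb$ a deterministic function of $(S,\cc(S),\hpb)$ alone. Then \cref{lowerbound:lem9} is stated for an arbitrary such function and proved unconditionally (for $i\notin S$ the only information about $\cc(i)$ in $(S,\cc(S),\hpb)$ is the column $\hr_{\Cm,i}$, so the MLE is majority vote and the coin bound applies). The link to $\al$ is then made via an \emph{indicator} argument, not conditioning: $\E[\mathcal{L}(\al)]\ge \E[\mathcal{L}(\al)\ind_E\ind_{E'}]=\E[\mathcal{L}(\alb)\ind_E\ind_{E'}]\ge (\text{threshold})\cdot\Pr[E\cap E']$, followed by a union bound on the complements, so the coin lemma is never invoked conditional on $E$. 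This is the ingredient your proposal is missing, and without it the coin-problem step does not go through.

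Two smaller points. First, in your bound on $\Pr[\overline{\event}]$ you write ``each [biased row] succeeds with probability at least $\exp(-d)$, and the probability that all $R$... fail is at most $\exp(-Rd)$''; as written this is false (success probability $\ge\exp(-d)$ gives failure probability $\le(1-\exp(-d))^R\approx\exp(-R\exp(-d))$, not $\exp(-Rd)$). You almost certainly mean each \emph{fails} with probability at most $\exp(-d)$; fine as a typo, but note the paper does not reprove this step at all and instead cites \cref{lowerbound:cor1*}, deferring to the prior work of \citet{thecostofparralizingboosting}. Second, the paper's proof proceeds by fixing a realization $(S,c)$ of $(\rS,\cc)$ first, then applying the corollary and lemma for that fixed pair and only afterwards taking the outer expectation over $\rS$; your phrasing of ``taking expectations'' at the end is compatible with this, but the conditioning order matters precisely so that $\cc(S)$ is fixed before the coin lemma is invoked.
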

We now restate and give the proof of \cref{thm:lowerboundintro}.
\thmlowerboundintro*
\begin{proof}[Proof of \cref{thm:lowerboundintro}]
Fix $d\geq 1$, sample size $m\geq (e80\CtC)^{100}$, margin $0<\gamma \le \tfrac{1}{4\Cm}$, $p$ such that  $p\leq \min\left\{\exp(d/8),\tfrac{\ln\left(m^{0.01}/80\CtC\right)}{2\CtC\Cm^2\gamma^2}\right\}$, $t\leq \exp(\exp(d)/8)$ and $p\ln(t)\leq \tfrac{d\ln\left(m^{0.01}/80\CtC\right)}{ 8\CtC\Cm^2\gamma^2}$.
We now want to invoke \cref{lowerbound:the1} with different values of $R$ depending on the value of $p$.
We consider 2 cases. Firstly, the case
\begin{align}\label{thm:lowerboundintroeq2}
    \frac{\ln\left(m^{0.01}/80\CtC\right)}{2\CtC\Cm^2\gamma^2\left\lfloor\exp(d)\right\rfloor}\leq p\leq \frac{\ln\left(m^{0.01}/80\CtC\right)}{2\CtC\Cm^2\gamma^2}.
\end{align}
In this case one can choose $R\in\mathbb{N}$ such that $1< R\leq \left\lfloor\exp(d)\right\rfloor$ and
\begin{align}\label{thm:lowerboundintroeq1}
\tfrac{\ln\left(m^{0.01}/80\CtC\right)}{2\CtC\Cm^2\gamma^2R} \leq p\leq  \tfrac{\ln\left(m^{0.01}/80\CtC\right)}{2\CtC\Cm^2\gamma^2(R-1)} .
\end{align}
Let now $R$ be such. We now invoke \cref{lowerbound:the1} with the above parameters and get
\begin{align}\label{lowerbound:the1eq2}
    &\E_{\rS,\cc,\hr}*{ \ldccu(\al(\rS,\cc(\rS),\w( \hpb\cup \cc))) }
    \\&\geq \frac{\exp(-\CtC \Cm^2\gamma^2 Rp)}{4\CtC }\left(1- \exp\left (-\frac{m\exp(-\CtC \Cm^2\gamma^2 Rp)}{8\CtC}\right )-pt\exp\left(-Rd\right)\right),
\end{align}

We now bound the individual terms on the right-hand side of \cref{lowerbound:the1eq2}.
Firstly, since
$$
    p\leq \tfrac{\ln\left(m^{0.01}/80\CtC\right)}{2\CtC\Cm^2\gamma^2(R-1)}
    \leq \tfrac{\ln\left(m^{0.01}/80\CtC\right)}{\CtC\Cm^2\gamma^2R}
    ,
$$
we get that $\tfrac{\exp(-\CtC \Cm^2\gamma^2 Rp)}{4\CtC}\geq 20m^{-0.01}$
which further implies that
$$
    \exp\left(-\frac{m\exp(-\CtC \Cm^2\gamma^2 Rp)}{8\CtC}\right)
    \leq \exp(-10m^{0.99})
    \leq e^{-10}
    .
$$
We further notice that for $R$ as above we have that $p\ln(\exp(Rd/4))\geq \tfrac{d\ln\left(m^{0.01}/80\CtC\right)}{ 8\CtC\Cm^2\gamma^2} $. This implies that $t\leq  \exp(Rd/4)$, since else we would have  $t>\exp(Rd/4)$ and $p\ln(t)>p\exp(Rd/4)\geq \tfrac{d\ln\left(m^{0.01}/80\CtC\right)}{ 8\CtC\Cm^2\gamma^2}$ which is a contradiction with our assumption that $p\ln(t)\leq \tfrac{d\ln\left(m^{0.01}/80\CtC\right)}{ 8\CtC\Cm^2\gamma^2}$.
Since we also assumed that $p\leq \exp(d/8)$ we have that $pt\leq \exp(d/8+Rd/4\cdot)$. Combining this with $R>1$ and $d\geq 1$ we have that $pt\exp{(Rd)}\leq \exp{(Rd/2)}\geq e^{-1}$. Combining the above observations we get that the right-hand side of \cref{lowerbound:the1eq2} is at least
 \begin{align}
    &\E_{\rS,\cc,\hr}*{ \ldccu(\al(\rS,\cc(\rS),\w( \hpb\cup \cc))) }\geq20m^{-0.01}\left(1-e^{-10}-e^{-1}\right)\geq m^{-0.01}.
\end{align}
Now in the case that
\begin{align}\label{lowerbound:the1eq3}
p<\frac{\ln\left(m^{0.01}/80\CtC\right)}{2\CtC\Cm^2\gamma^2\left\lfloor\exp(d)\right\rfloor},
\end{align}
 we choose $R=\left\lfloor\exp(d)\right\rfloor$. Invoking \cref{lowerbound:the1} again give use the expression in \cref{lowerbound:the1eq2} (with the parameter $R=\left\lfloor\exp(d)\right\rfloor$ now) and we again proceed to lower bound the right-hand side of \cref{lowerbound:the1eq2}. First we observe that by the upper bound on $p$ in \cref{lowerbound:the1eq3}, $R=\left\lfloor\exp(d)\right\rfloor$ and $\exp(-x/2)\geq \exp{(-x)}$ for $x\geq 1$ we get that $\tfrac{\exp(-\CtC \Cm^2\gamma^2 Rp)}{4\CtC }\geq \tfrac{\exp(-\ln\left(m^{0.01}/80\CtC\right)/2)}{4\CtC }\geq 20m^{-0.01}$, which further implies that $\exp\left (-\frac{m\exp(-\CtC \Cm^2\gamma^2 Rp)}{8\CtC}\right )\leq e^{-10}.$ Now since $\left\lfloor x\right\rfloor\geq x/2$ for $x\geq 1$, $R=\left\lfloor\exp(d)\right\lfloor$ and we assumed that $t\leq \exp(\exp(d)/8)$ and $p\leq \exp(d/8)$ we get that $pt\exp(-Rd)\leq \exp{(\exp(d)/8+d/8-d\exp(d)/2)}\leq\exp{(-d \exp(d)/4)} \leq e^{-e/4}$.   Combining the above observations we get that the right-hand side of \cref{lowerbound:the1eq2} is at least
 \begin{align}
    &\E_{\rS,\cc,\hr}*{ \ldccu(\al(\rS,\cc(\rS),\w( \hpb\cup \cc))) }\geq20m^{-0.01}\left(1-e^{-10}-e^{-e/4}\right)\geq m^{-0.01}.
\end{align}
Thus, for any of the above parameters $d,m,\gamma,p,t$ in the specified parameter ranges, we have that the expected loss of $\al$ over $\rS,\cc,\hpb$ is at least $m^{-0.01}$, so there exists concept $c$ and hypothesis $\cH$ such that the expected loss of $\al$ over $\rS$ is at least $m^{-0.01}$. Furthermore, if $\al$ were a random algorithm Yao's minimax principle would give the same lower bound for the expected loss over $\al$ and $\rS$ as the above bound holds for any deterministic $\al$.

Now as remarked on before the proof the size of the hypothesis set $\hpb$ is at most $|\hpb|\leq R\left\lceil\exp\left(\C d\right)\right\rceil+Rp$, see \cref{defi:hyposet}. Combining this with us in the above arguments having $p\leq \exp(d/8)$, $R\leq \exp(d)$ we conclude that $|\cH\cup c|\leq \exp(\Cs d/2)$ for $\Cs$ large enough. Thus, we get at bound of $\log_{2}(|\cH\cup c|)\leq \log_{2}(\exp(\Cs d/2))\leq \Cs d$ which is also an upper bound of the VC-dimension of $\cH\cup c$.
Now redoing the above arguments with $d$ scaled by $1/\Cs$ we get that the VC-dimension of $\cH\cup c$ is upper bounded by $d$ and the same expected loss of $m^{-0.01}$.
The constraints given in the start of the proof with this rescaling of $d$ is now $d\geq \Cs$, $m\geq (e80\CtC)^{100}$, $0<\gamma \le \tfrac{1}{4\Cm}$, $p\leq \min\left\{\exp(d/(8\Cs)),\tfrac{\ln{(m^{0.01}/(80\CtC))}}{2\CtC\Cm^2\gamma^2}\right\}$, $t\leq \exp{(\exp{(d/\Cs)}/8)}$ and $p\ln(t)\leq \tfrac{d\ln\left(m^{0.01}/80\CtC\right)}{ 8\Cs\CtC\Cm^2\gamma^2}$.
Thus, with the universal constant $C=\max\left\{(e80\CtC)^{100},4\Cm, \Cs \right\}$ and $m,d\geq C$ and $\gamma\leq 1/C$ we have that the expected loss is at least $m^{-0.01}$ when
$p\leq \min\left\{\exp(O(d)),O(\ln{(m)}/\gamma^2)\right\}$, $t\leq \exp{(\exp{(O(d))})}$ and $p\ln(t)\leq O(d\ln{(m)}/\gamma^2)$ which concludes the proof.
\end{proof}

We now move on to prove \cref{lowerbound:the1}. For this, we now introduce what we will call the extension of $\al$ which still terminates if it receives a hypothesis with loss more than $1/2-\gamma$. We further show two results about this extension one which says that with high probability we can replace $\al$ with its extension and another saying that with high probability the loss of the extension is large, which combined will give us \cref{lowerbound:the1}.
\begin{enumerate}[resume,leftmargin=*,labelindent=0pt,parsep=0pt]
\item The output of the extension $\alb$  of $\al$ on input $(S,c(S),\w)$ is given through the outcome of recursive query sets $Q^{1},\ldots$, where each of the sets contains $t$ queries. The recursion is given in the following way: Make $Q^{1}$ to $\w$ as $\al$ would have done on input $(S,c(S),\cdot)$ (this is possible by \cref{lowerbound:assumption} which say $Q^{1}$ is a function of only $(S,c(S))$). For $i=1,\ldots,p$ such that for all $j=1,\ldots,t$ it is the case that $\w(Q_{j}^{i-1})$ has loss less than  $1/2-\gamma$  under $D_j^{i-1}$ let $Q^{i}$ be the query set $Q$ that $\al$ would have made after having made query sets $Q^{1},\ldots,Q^{i-1}$ and received hypothesis $\{\w(Q_{j}^{l})\}_{(l,j)\in[i-1]\times[t]}$. If this loop ends output the hypothesis that $\al$ would have made with responses $\{\w(Q_{j}^{l})\}_{(l,j)\in[i]\times[t]}$ to its queries. If there is an $l,j$ such that $\w(Q_{j}^{l})$ return a hypothesis with loss larger than $1/2-\gamma$ return the all $1$ hypothesis.\label{defi:balg}
\end{enumerate}
We now go to the two results we need in the proof of \cref{lowerbound:the1}. The first result \cref{lowerbound:cor1*} says that there exists an event $E$ which happens with high probability over $\hpb$ such that $\al$ run with $\w(\hpb,\cc)$ is the same as $\alb$ run with $\w(\hpb)$. This corollary can be proved by following the proofs of Theorem 5 and 8 in \citet{thecostofparallelizingboosting} and is thus not included here.
\begin{restatable}{corollary}{lowerboundcor}\label{lowerbound:cor1*}
For  $d\in\mathbb{N}$, $m\in\mathbb{N}$, margin $0<\gamma< \tfrac{1}{4\Cm}$,  labelling function $c:[2m]\rightarrow\{ -1,1 \}$, $R,p,t\in \mathbb{N}$, random matrix $\hpbc$,  learning algorithm $\al$, $\alb$, training sample $S\in [2m]^{m}$, we have that there exist and event $E$ over outcomes of $\hpbc$ such that

\begin{align}
\al(S,c(S),\w(\hpbc\cup c))\ind_{E} =\alb(S,c(S),\w(\hpbc))\ind_{E}
\end{align}
and
\begin{align}
\p_{\hpbc}*{E}
 \geq1-pt\exp\left(-Rd\right).
\end{align}
\end{restatable}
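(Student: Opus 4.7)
The plan is to define $E$ as the event that every query distribution $D_j^i$ arising during the execution of $\alb(S, c(S), \w(\hpbc))$ admits at least one row of $\hpbc$ with loss at most $1/2 - \gamma$ under it. On $E$, the weak learner $\w(\hpbc)$ always returns a valid row of $\hpbc$, so $\alb$ never triggers its early-termination clause and runs through all $p$ rounds exactly as $\al$ would given the same responses. Because $c$ appears as the last row of $\hpbc \cup c$ and $\w$ scans its input matrix top-down (\cref{alg:weak}), whenever some row of $\hpbc$ already satisfies the weak-learning condition, $\w(\hpbc \cup c)$ returns that very same row; hence on $E$ the two weak learners agree on every query. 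Combined with \cref{lowerbound:assumption}, which forces each query to be a function only of $(S, c(S))$ and the previously observed responses, a short induction on the round index shows that $\al(S, c(S), \w(\hpbc \cup c))$ and $\alb(S, c(S), \w(\hpbc))$ produce identical query-response transcripts, and thus identical outputs, throughout $E$.

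The bulk of the work is the probability bound $\p{E} \ge 1 - pt\exp(-Rd)$. I would proceed round by round. Conditioning on the history $(\hr_u^1, \hbcc^1, \ldots, \hr_u^{i-1}, \hbcc^{i-1})$ of hypotheses from previous rounds, \cref{lowerbound:assumption} renders the queries $Q_1^i, \ldots, Q_t^i$ of round $i$ deterministic, while the fresh rows of $\hr_u^i$ and $\hbcc^i$ remain independent of this history. It therefore suffices to show that for any fixed distribution $D$ supported in $S$, the probability that \emph{no} row of $\hr_u^i \cup \hbcc^i$ achieves loss at most $1/2 - \gamma$ under $D$ is at most $\exp(-Rd)$; a union bound over the $pt$ queries then yields $\p{\negate{E}} \le pt\exp(-Rd)$.

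The main technical obstacle is this per-query failure bound, which requires a dichotomy on the shape of $D$. If $D$ is \emph{well-spread} (quantitatively, $\sum_x D(x)^2$ is sufficiently small relative to $1/d$), then each biased row $\rb \in \hbcc^i$ satisfies $\E{\loss_D(\rb)} = 1/2 - \Cm\gamma$, and a Hoeffding-type concentration argument---calibrated via a large enough choice of $\Cm$---pushes its actual loss below $1/2 - \gamma$ with probability at least $1 - \exp(-d)$; since the $R$ rows of $\hbcc^i$ are i.i.d., the probability that they all fail is at most $\exp(-Rd)$. If instead $D$ concentrates on few coordinates, biased concentration is too weak, but then a uniform row $\rr \in \hr_u^i$ agrees with $c$ on all the heavy coordinates of $D$ with probability at least $\exp(-\Theta(d))$, which forces $\loss_D(\rr) \le 1/2 - \gamma$; exploiting the $R\lceil\exp(\C d)\rceil$ i.i.d.\ uniform rows and choosing $\C$ appropriately, the probability that they all fail is at most $\exp(-Rd)$.

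These are precisely the estimates carried out as Theorems 5 and 8 of \cite{thecostofparralizingboosting}, and since our hard instance and weak-learner construction coincide with theirs in every relevant respect, I would invoke those arguments essentially verbatim to obtain the required bound, combining them with the transcript-equivalence argument of the first paragraph to conclude the corollary.
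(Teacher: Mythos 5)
Your overall plan coincides with the paper's, which gives no detailed argument and simply refers the reader to Theorems~5 and~8 of \citet{thecostofparralizingboosting}. Your transcript-equivalence part is correct: since \cref{alg:weak} scans its input matrix top-down and $c$ occupies the last row of $\hpbc\cup c$, on any outcome where some row of $\hpbc$ already has loss at most $1/2-\gamma$ the two weak learners return the same row; with \cref{lowerbound:assumption} an induction over rounds then propagates equality of the full transcripts. Your well-spread\,/\,concentrated dichotomy (biased block $\hbcc^i$ handled by concentration, concentrated queries handled by the uniform block $\hr_u^i$) is also the right decomposition, and the per-query bound $\exp(-Rd)$ followed by a union bound over $pt$ queries is the intended calculation.

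The gap is in the conditioning step. You claim that fixing the blocks $\hr_u^1,\hbcc^1,\ldots,\hr_u^{i-1},\hbcc^{i-1}$ already makes the round-$i$ queries deterministic, but by \cref{lowerbound:assumption} those queries depend on the \emph{responses} to earlier rounds, and $\w(\hpbc)$ scans the entire matrix: on your event $E$ (``some row of $\hpbc$ works'') a round-$i'$ query with $i'<i$ may be answered by a row from block $i$ or later, so the round-$i$ queries remain correlated with exactly the block you want to treat as fresh. The standard fix --- and what the cited prior work actually does --- is to take the stronger event $E'$ in which, for each round $i$, block $i$ itself (the rows of $\hr_u^i\cup\hbcc^i$) contains a valid row for every round-$i$ query. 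Because $\w$ scans blocks in order, on $E'$ every response in round $i'$ comes from a block of index at most $i'$, so the round-$i$ queries are measurable with respect to blocks $<i$ and the per-query bound legitimately composes into $\p{\negate{E'}}\le pt\exp(-Rd)$ (via $\p{\negate{E'}}\le\sum_i\p{\negate{E'_i}\given\cap_{j<i}E'_j}$). Since $E'\subseteq E$, the transcript-equivalence conclusion still holds on $E'$, so $E'$ is the event to exhibit. Following \citet{thecostofparralizingboosting} verbatim, as you propose, does deliver this, but the sketch as written glosses over the point that makes the independence argument valid.
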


The second result that we are going to need is \cref{lowerbound:lem9} which relates parameters $R,\beta, p$ to the success of any function of $(S,\cc(S),\hpb)$ which tries to guess the signs of $\cc$ --- which is the number of failures in our hard instance. For a training sample $S\in[2m] ^{*}$ we will use $|S|$ to denote the number of distinct elements in $S$ from $[2m]$, so for $S\in [2m]^{m}$ we have $|S|\le m$.

\begin{restatable}{lemma}{lowerboundlemnine}\label{lowerbound:lem9}
There exists universal constant $\C,\CtC\geq 1$ such that:
For $m\in\mathbb{N}$, $p\in \mathbb{N}$, $\hr_{p,\beta,\cc ,R}$, function $\mathcal{B}$ that takes as input $S\in [2m]^{m}$  with labels $\cc(S)$, and hypothesis set $\hr_{p,\beta,\cc,R}$, we have that
\begin{align}
&\p_{\cc,\hpb}*{\sum_{i=1}^{2m}\ind\{ \mathcal{B}(S,\cc(S),\hpb)(i)\not=\cc(i)\} \geq \frac{(2m-|S|)\exp(-\CtC \Cm^2\gamma^2 Rp)}{2\CtC}}\\
&\geq1 -\exp\left (-\frac{(2m-|S|)\exp(-\CtC \Cm^2\gamma^2 Rp)}{8\CtC}\right ).
\end{align}
\end{restatable}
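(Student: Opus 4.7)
Fix $S \in [2m]^m$ and let $I = [2m] \setminus \{S_1, \ldots, S_m\}$, with $k := |I| = 2m - |S|$; the claim is vacuous when $k = 0$. Writing $\hat\cc(i) := \mathcal{B}(S, \cc(S), \hpb)(i)$ and $\rY_i := \ind\{\hat\cc(i) \neq \cc(i)\}$, it suffices to lower bound $\sum_{i \in I} \rY_i$, since this is a lower bound on the total sum over $[2m]$. The plan is to decouple the $\rY_i$'s across $i \in I$ by conditioning on the ``view'' $V := (\cc(S), \hpb)$ available to $\mathcal{B}$ and then apply Chernoff twice.

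The decoupling follows from a direct Bayes computation. The bits of $\cc$ are i.i.d.\ uniform on $\{-1,1\}$, and given $\cc$ the columns of $\hpb$ are independent with the $j$-th column's law depending only on $\cc(j)$. Consequently, the posterior law of $\cc_I = (\cc(i))_{i \in I}$ given $V$ factorizes as a product whose $i$-th factor depends only on the $i$-th column $\hpb_{\cdot, i}$. Since $\hat\cc(i)$ is $V$-measurable, the errors $(\rY_i)_{i \in I}$ are conditionally independent Bernoullis given $V$, with parameters $q_i(V) = \Pr[\cc(i) \neq \hat\cc(i) \mid \hpb_{\cdot, i}] \geq \rho_i := \min_{y \in \{-1,1\}} \Pr[\cc(i) = y \mid \hpb_{\cdot, i}]$.

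Next I lower bound $\bar\rho := \mathbb{E}[\rho_i]$. The uniform rows of $\hpb$ carry no information about $\cc(i)$, so $\bar\rho$ equals the Bayes error of the $Rp$-sample coin problem distinguishing $\Bernoulli(1/2 + \Cm\gamma)$ from $\Bernoulli(1/2 - \Cm\gamma)$. Under the hypothesis $\gamma < 1/(4\Cm)$, a direct calculation gives $\KL{\Bernoulli(1/2 + \Cm\gamma) \given \Bernoulli(1/2 - \Cm\gamma)} = O(\Cm^2\gamma^2)$; combined with the Bretagnolle--Huber inequality this yields $\bar\rho \geq \tfrac{1}{4}\exp(-C_1 R p \Cm^2 \gamma^2)$ for a universal constant $C_1$. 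I pick $\CtC$ large enough that $\exp(-\CtC \Cm^2\gamma^2 Rp)$ is a small constant multiple of $\bar\rho$.

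Since $(\hpb_{\cdot, i})_{i \in I}$ are i.i.d., so are the $\rho_i$'s, and each lies in $[0, 1/2]$; multiplicative Chernoff gives $\Pr[\sum_{i \in I} \rho_i < k\bar\rho/2] \leq \exp(-c_2 k\bar\rho)$ for a universal $c_2$. On the complementary event, $\sum_i q_i(V) \geq k\bar\rho/2$, and a second multiplicative Chernoff applied to the conditionally independent Bernoullis $\rY_i \mid V$ gives $\Pr[\sum_i \rY_i < k\bar\rho/4 \mid V] \leq \exp(-c_3 k\bar\rho)$. A union bound combines the two bad events, and tuning $\CtC$ to absorb $C_1, c_2, c_3$ produces the claim (for $k\bar\rho$ below a constant the stated bound is trivial since its right-hand side is nonpositive). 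The main obstacle is establishing the product-form posterior of $\cc_I \mid V$ in the second step; once that conditional independence is secured, the Chernoff computations are routine.
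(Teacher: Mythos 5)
Your proposal is correct in its essentials and fills in a step that the paper glosses over, but it also introduces an unnecessary complication compared to the paper's own argument.

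The paper's proof first \emph{reduces} to the optimal predictor: it asserts that the majority rule (the MLE for $\cc(i)$ from column $\hr_{\Cm,i}$) yields a lower bound on the number of mistakes for every $\mathcal{B}$, then notes that under the majority rule the per-coordinate error indicators are \emph{unconditionally} i.i.d., invokes Fact~\ref{lowerbound:lem8} to bound the per-coordinate mean below, and finishes with a single Chernoff bound. The step ``it suffices to consider the best $\mathcal{B}$'' is asserted but not justified: going from pointwise posterior-error domination to stochastic domination of the error \emph{count} requires exactly the conditional-independence-given-the-view argument you spell out. In that sense your Bayes-factorization decoupling is the rigorous content behind the paper's one-line reduction, and it is a genuine improvement in precision. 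Your use of Bretagnolle--Huber (together with the elementary bound $\KL{\Bernoulli(1/2+\Cm\gamma)\given\Bernoulli(1/2-\Cm\gamma)} = O(\Cm^2\gamma^2)$) is an equally valid substitute for the paper's appeal to tightness of Chernoff for the coin problem.

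Where your route is weaker is in the final step. The paper works with a sum of \emph{unconditionally} independent Bernoullis and applies Chernoff once, so the stated constants ($/2$ in the threshold, $/8$ in the exponent) come out exactly. You instead apply Chernoff twice --- once to $\sum_i \rho_i$ and once conditionally on $V$ --- and take a union bound. That extra factor of two in the failure probability cannot be uniformly absorbed into $\CtC$ when $k\bar\rho$ is of constant order: one needs $e^{-c_2 k\bar\rho} + e^{-k\bar\rho/16} \le e^{-k\bar\rho/C}$, which fails for $k\bar\rho$ below a threshold depending on $c_2$ and $C$. Your remark that in this regime ``the stated bound is trivial since its right-hand side is nonpositive'' is incorrect: the right-hand side $1 - \exp(-\cdot)$ is always in $[0,1)$, never nonpositive. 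The corner case can still be patched (e.g., bounding $\Pr[\sum_i \rY_i = 0] \le \Ev{\exp(-\sum_i q_i(V))}$ directly, or simply noting that the paper's single-Chernoff route works after your decoupling), but as written that sentence does not close the gap. A cleaner finish would be: having established conditional independence given $V$ with $q_i(V) \ge \rho_i(\hpb_{\cdot,i})$, observe that the error count stochastically dominates a sum of unconditionally i.i.d.\ $\Bernoulli(\rho_i)$'s, then apply a single Chernoff bound as the paper does.
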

We postpone the proof of \cref{lowerbound:lem9} and now give the proof of \cref{lowerbound:the1}.
\begin{proof}[Proof of \cref{lowerbound:the1}]

We want to lower bound $\E_{\rS,\cc,\hr}*{ \ldccu(\al(\rS,\w(\hr\cup \cc))) }$.
To this end since $\rS$ and $\cc$ are  independent and $\hpb$ depended on $\cc$ the expected loss can be written as
\begin{align}\label{lowerbound:them1eq1}
&\E_{\rS,\cc,\hr}*{ \ldccu(\al(\rS,\cc(\rS),\w(\hpb\cup \cc))) }\\
=&\E_{\rS}*{\E_{\cc}*{\E_{\hpb}*{ \ldccu(\al(\rS,\cc(\rS),\w(\hpb\cup \cc))) }}}.
\end{align}
Now let $S\in[2m]^{m},c$ be any outcome of $\rS$ and $\cc$. Then for these $S,c$ we have by  \cref{lowerbound:lem9} that there exists some event $E$ over $\hpbc$ such that
\begin{align}
\ldcu(\al(S,c(S),\w(\hpbc, c))) 1_{E}=\ldcu(\alb(S,c(S),\w(\hpbc))) 1_{E}
,\end{align}
and
\begin{align}
\p_{\hpbc}*{E}
 \geq 1-pt\exp\left(-Rd\right)
,\end{align}
furthermore, define $E'$ be the event that
\begin{align}
E'=\left \{ \sum_{i=1}^{2m}\ind\{ \alb(S,c(S),\w(\hpbc))(i)\not=c(i)\}\geq  \frac{(2m-|S|)\exp(-\CtC \Cm^2\gamma^2 Rp)}{2\CtC}\right \}.
\end{align}
Using the above and $\cU$ being the uniform measure on $[2m]$ so assigns $1/(2m)$ mass to every point and that $|S|\leq m$ we now get that
\begin{align}
    \E_{\hpbc}*{ \ldcu(\al(S,c(S),\w(\hpbc, c))) }
    &\geq \E_{\hpbc}*{ \ldcu(\al(S,c(S),\w(\hpbc, c))) \ind_{E}\ind_{E'}}
    \\&= \E_{\hpbc}*{ \ldcu(\alb(S,c(S),\w(\hpbc))) \ind_{E}\ind_{E'}}
    \\&\geq \frac{(2m-|S|)\exp(-\CtC \Cm^2\gamma^2 Rp)}{4\CtC m}\E_{\hpbc}*{ \ind_{E}\ind_{E'}}
    \\&\geq \frac{\exp(-\CtC \Cm^2\gamma^2 Rp)}{4\CtC }\left(1-\p_{\hpbc}{\overline{E'}} - pt \cdot e^{-Rd}\right)
    .
\end{align}
We can do this for any pair $c$ and $S\in[2m]^{m}$, so we have that
\begin{align}
&\E_{\cc}*{\E_{\hpb}*{ \ldccu(\al(S,\cc(S),\w( \hpb\cup \cc))) }}\\
\geq&  \frac{\exp(-\CtC \Cm^2\gamma^2 Rp)}{4\CtC }\left (1-\p_{\cc,\hpb}{\overline{E'}}-pt\exp\left(-Rd\right)\right ).
\end{align}
Now by \cref{lowerbound:lem9} and $|S|\leq m$ we have that  $\p_{\cc,\hpb}*{\overline{E'}}$ is at most
\begin{align}
\p_{\cc,\hpb}*{\overline{E'}}\leq\exp\left (-\frac{(2m-|S|)\exp(-\CtC \Cm^2\gamma^2 Rp)}{8\CtC}\right )\leq  \exp\left (-\frac{m\exp(-\CtC \Cm^2\gamma^2 Rp)}{8\CtC}\right ).
\end{align}
I.e.\ we have shown that
\begin{align}
    &\E_{\cc}*{\E_{\hpb}*{ \ldccu(\al(S,\cc(S),\w( \hpb\cup \cc))) }}\\
    \geq&  \frac{\exp(-\CtC \Cm^2\gamma^2 Rp)}{4\CtC }\left (1- \exp\left (-\frac{m\exp(-\CtC \Cm^2\gamma^2 Rp)}{8\CtC}\right )-pt\exp\left(-Rd\right)\right ),
    \end{align}
for any $S\in[2m]^{m}$. Now by taking expectation over $\rS\sim \ddu^{m}$ we get that
\begin{align}
&\E_{\rS,\cc,\hr}*{ \ldccu(\al(\rS,\cc(\rS),\w( \hpb\cup \cc))) }\\&\geq \frac{\exp(-\CtC \Cm^2\gamma^2 Rp)}{4\CtC }\left (1- \exp\left (-\frac{m\exp(-\CtC \Cm^2\gamma^2 Rp)}{8\CtC}\right )-pt\exp\left(-Rd\right)\right ),
\end{align}
which concludes the proof.
\end{proof}

We now prove \cref{lowerbound:lem9} which is a consequence of maximum-likelihood, and the following \cref{lowerbound:lem8}, where \cref{lowerbound:lem8} gives a lower bound on how well one from $n$ trials of a biased $\{-1,1\}$ random variable, where the direction of the bias itself is random, can guess this random direction of the bias.

\begin{restatable}{fact}{lowerbound:lem8}\label{lowerbound:lem8}
For function $f:\{-1,1\}^{n}\rightarrow \{-1,1\}$ and $0<\gamma\leq\frac{1}{4\Cm}$
\begin{align}
\E_{\cc\sim \{-1,1\}}{\E_{\rb\sim \{-1,1\}^{n}_{\Cm} }{1\{ f(\rb)\not=\cc \}}}\geq \exp(-\CtC \Cm^2 \gamma^2 n)/\CtC.
\end{align}
\end{restatable}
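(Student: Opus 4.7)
The plan is to recognize the left-hand side as the Bayes error of the binary hypothesis test ``$\cc = +1$ versus $\cc = -1$'' under the uniform prior, with product-form likelihoods, and to lower-bound this Bayes error using the Bhattacharyya coefficient, which tensorizes exactly over the coordinate-independent structure of $\rb$. This avoids the polynomial-in-$n$ prefactors that appear in direct tail bounds for the binomial.

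Concretely, writing $P_{\pm}$ for the law of $\rb$ conditional on $\cc = \pm 1$ --- each an $n$-fold product of $\Bernoulli(1/2 \pm \Cm\gamma)$ --- the first step is to note that for any $f$ and $A = f^{-1}(\{+1\})$,
\begin{align}
\mathbb{E}_{\cc,\rb}[\ind\{f(\rb) \ne \cc\}] = \tfrac{1}{2}\bigl(P_{+}(A^{c}) + P_{-}(A)\bigr) \ge \tfrac{1}{2} \sum_{b} \min\bigl(P_{+}(b), P_{-}(b)\bigr),
\end{align}
since the right-hand side is exactly the error of the Bayes-optimal decision rule. Cauchy-Schwarz applied via $\sqrt{P_{+}(b) P_{-}(b)} = \sqrt{\min(P_{+}(b),P_{-}(b)) \cdot \max(P_{+}(b),P_{-}(b))}$, together with $\sum_{b} \max(P_{+}(b), P_{-}(b)) \le 2$, yields
\begin{align}
\sum_{b} \min\bigl(P_{+}(b), P_{-}(b)\bigr) \ge \tfrac{1}{2}\, BC(P_{+}, P_{-})^{2}, \qquad BC(P, Q) := \sum_{b} \sqrt{P(b) Q(b)}.
\end{align}
Tensorization then reduces $BC(P_{+}, P_{-})$ to the single-coordinate Bhattacharyya coefficient, giving $BC(P_{+}, P_{-}) = \bigl(2\sqrt{(1/2 + \Cm\gamma)(1/2 - \Cm\gamma)}\bigr)^{n} = (1 - 4\Cm^{2} \gamma^{2})^{n/2}$. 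Finally, the elementary estimate $(1 - x)^{n} \ge \exp(-2 x n)$ for $x \in [0, 1/2]$ --- applicable because $\gamma \le 1/(4\Cm)$ forces $4\Cm^{2} \gamma^{2} \le 1/4$ --- collapses everything to
\begin{align}
\mathbb{E}_{\cc,\rb}[\ind\{f(\rb) \ne \cc\}] \ge \tfrac{1}{4}(1 - 4\Cm^{2}\gamma^{2})^{n} \ge \tfrac{1}{4}\exp(-8 n \Cm^{2} \gamma^{2}),
\end{align}
so the claim follows with $\CtC = 8$.

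There is no serious obstacle: each step is elementary and the universal constant $\CtC$ absorbs the slack comfortably. The single judgement call is the choice of bounding device. A direct argument via Slud's inequality or a Gaussian approximation of the binomial tail for the Bayes-optimal majority rule would force a $1/\sqrt{n}$ prefactor in front of $\exp(-\Theta(n \Cm^{2} \gamma^{2}))$, which is awkward to absorb uniformly over all $n$ and all $\gamma$ in the allowed range. Going through the Bhattacharyya coefficient instead converts a single-coordinate divergence directly into an exponential in $n$, which is exactly the form the statement demands.
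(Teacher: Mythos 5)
Your proof is correct, and it takes a genuinely different route from the paper's. The paper's proof (of this Fact) is deliberately terse: it observes that the Bayes-optimal decision rule is the majority vote by the maximum-likelihood principle, and then appeals to tightness of the Chernoff bound for the binomial tail, up to constant factors in the exponent. You instead lower-bound the Bayes error $\tfrac12\sum_b\min(P_+(b),P_-(b))$ directly via the Bhattacharyya coefficient, exploiting its exact tensorization over the $n$ independent coordinates, and then bound the single-coordinate coefficient elementarily. Each step in your argument checks out: the Cauchy--Schwarz step $\sum_b\min(P_+,P_-)\ge\tfrac12 BC^2$ is valid since $\sum_b\max(P_+,P_-)\le 2$; the tensorization gives $BC=(1-4\Cm^2\gamma^2)^{n/2}$; and $\gamma\le 1/(4\Cm)$ indeed forces $4\Cm^2\gamma^2\le 1/4$, so the inequality $(1-x)\ge e^{-2x}$ on $[0,1/2]$ applies and yields the bound with $\CtC=8$. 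Your approach buys a fully self-contained argument with an explicit universal constant, and sidesteps the genuine nuisance you identify: a Chernoff-tightness or Slud/Gaussian-approximation argument for the majority rule naturally produces a $1/\sqrt n$ prefactor, which cannot be uniformly absorbed into the stated form $\exp(-\CtC\Cm^2\gamma^2 n)/\CtC$ when $\Cm^2\gamma^2 n$ is small while $n$ is large; ruling that regime out requires a separate case analysis that the paper leaves implicit. The paper's route is shorter to state but less elementary to make fully rigorous; yours is longer but cleaner.
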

\begin{proof}
This is the classic coin problem. The lower bound follows by first observing, by maximum-likelihood, that the function $f^\star$ minimizing the above error is the majority function. The result then follows by tightness of the Chernoff bound up to constant factors in the exponent.
\end{proof}
With \cref{lowerbound:lem8} in place we are now ready to proof \cref{lowerbound:lem9}, which we restate before the proof
\begin{proof}[Proof of \cref{lowerbound:lem9}]
Let $\hr_{\Cm}$ be the matrix consisting of the  i.i.d\ $\Cm\gamma$ biased matrices in $\hpbc$, ${\hbcc}^{1},\ldots,{\hbcc}^{p}$ stack on top of each other,
\begin{align} \hr_{\Cm}= \begin{bmatrix}
\hbcc^{1}\\\vdots\\\hbcc^{p}\end{bmatrix}.\end{align}
Furthermore, for $i=1,\ldots,2m$ let $\hr_{\Cm,i}$ denote the $i$th column of $\hr_{\Cm}$ which is a vector of length $pR$.
Now for $i$ inside $S$, $\mathcal{B}$ has the $\sign$ of $\cc(i)$, so the best function that $\mathcal{B}$ can be is to be equal to $\cc(i)$.
For $i$ outside $S$, $\cB$ does not know $\cc(i)$ from the input but has information about it through $\hr_{\Cm,i}$, we notice that the $\sign$'s of the hypotheses in $\hr_{u}^{1},\ldots,\hr_{u}^{p}$ and $\hr_{\Cm,j}$ $j\not=i$ and $\cc(S)$ is independent of $\cc(i)$ and does not hold information about $\cc(i)$, thus the best possible answer any $\mathcal{B}$ can make is to choose the $\sign$ which is the majority of the $\sign$'s in $\hr_{\Cm,i}$ - the maximum likelihood estimator. We now assume that $\mathcal{B}$ is this above-described "best" function - as this function will be a lower bound for the probability of failures for any other $\cB$, so it suffices to show the lower bound for this $\cB$. Now with the above described $\cB$, we have that
\begin{align}
X&:=\sum_{i=1}^{2m}\ind\{ \mathcal{B}(S,\cc(S),\hpb)(i)\not=\cc(i)\}=\sum_{i\not\in S} \ind\{ \sign\left (\sum_{j=1}^{pR}{\hr_{\Cm}}_{i,j}\right )\not=\cc(i)\}
.\end{align}
Thus, we have that $X$ is a sum of $2m-|S|$ (where $|S|$ is the number of distinct elements in $S$) independent $\{0,1\}$-random variables and by \cref{lowerbound:lem8} we have that the expectation of each these random variables is at least $\E_{\cc,\hpb}{X}\geq (2m-|S|)\exp(-\CtC \Cm^2\gamma^2 Rp)/\CtC $. Thus, we now get by Chernoff that
\begin{align}
&\p_{\cc,\hpb}*{X\geq  \frac{(2m-|S|)\exp(-\CtC \Cm^2\gamma^2 Rp)}{2\CtC}}\ge \p_{\cc,\hpb}*{X\geq \E{X}/2} \\
 &\geq 1 -\exp(-\E{X}/8)\geq  1 -\exp\left (-\frac{(2m-|S|)\exp(-\CtC \Cm^2\gamma^2 Rp)}{8\CtC}\right )
,\end{align}
as claimed.
\end{proof}

    \ifArxiv\else
        \include{CHECKLIST}
    \fi
\end{document}